\documentclass[journal]{IEEEtran}
\usepackage[utf8]{inputenc}

\usepackage{array}
\usepackage{amsmath,amssymb,amsfonts}
\usepackage{algorithmic}
\usepackage{float}
\usepackage{placeins}
\usepackage{cite}
\usepackage{graphicx}
\usepackage{xcolor}
\usepackage{booktabs}
\usepackage{multirow}
\usepackage{balance}
\usepackage[hidelinks]{hyperref}
\graphicspath{{./}}

\AtBeginDocument{%
  \setlength{\abovedisplayskip}{4.5pt plus 1.5pt minus 1.5pt}%
  \setlength{\belowdisplayskip}{4.5pt plus 1.5pt minus 1.5pt}%
  \setlength{\abovedisplayshortskip}{2.5pt plus 1pt minus 1pt}%
  \setlength{\belowdisplayshortskip}{3.5pt plus 1pt minus 1pt}%
}

\newenvironment{spadecompact}{%
  \setlength{\abovedisplayskip}{3pt plus 1pt minus 1pt}%
  \setlength{\belowdisplayskip}{3pt plus 1pt minus 1pt}%
  \setlength{\abovedisplayshortskip}{2pt plus 0.5pt minus 0.5pt}%
  \setlength{\belowdisplayshortskip}{2pt plus 0.5pt minus 0.5pt}%
  \setlength{\jot}{1.5pt}%
  \allowdisplaybreaks[1]%
}{}

\floatstyle{ruled}
\newfloat{algorithm}{tbp}{loa}
\floatname{algorithm}{Algorithm}

\newtheorem{assumption}{Assumption}
\newtheorem{lemma}{Lemma}
\newtheorem{theorem}{Theorem}
\newtheorem{corollary}{Corollary}

\newtheorem{remark}{Remark}

\makeatletter
\newcommand{\spade@theorembodyfont}{\rmfamily}
\let\spade@orig@begintheorem\@begintheorem
\let\spade@orig@opargbegintheorem\@opargbegintheorem
\renewcommand{\@begintheorem}[2]{%
  \spade@orig@begintheorem{#1}{#2}\spade@theorembodyfont}
\renewcommand{\@opargbegintheorem}[3]{%
  \spade@orig@opargbegintheorem{#1}{#2}{#3}\spade@theorembodyfont}
\let\spade@assumption\assumption
\let\spade@lemma\lemma
\let\spade@theorem\theorem
\let\spade@corollary\corollary
\renewcommand{\assumption}{%
  \renewcommand{\spade@theorembodyfont}{\itshape}\spade@assumption}
\renewcommand{\lemma}{%
  \renewcommand{\spade@theorembodyfont}{\itshape}\spade@lemma}
\renewcommand{\theorem}{%
  \renewcommand{\spade@theorembodyfont}{\itshape}\spade@theorem}
\renewcommand{\corollary}{%
  \renewcommand{\spade@theorembodyfont}{\itshape}\spade@corollary}
\makeatother

\newcommand{\E}{\mathbb{E}}
\newcommand{\R}{\mathbb{R}}
\newcommand{\one}{\mathbf{1}}
\newcommand{\range}{\operatorname{range}}

\newcommand{\datasetnote}[1]{\footnote{\raggedright\fontsize{7}{8}\selectfont\urlstyle{rm}\color{blue}\nolinkurl{#1}}}
\newcommand{\diag}{\operatorname{diag}}

\begin{document}

\title{SPADE-DFL: Communication-Efficient Decentralized Federated Learning via Derivative-Free Linearized ADMM} 
\author{Mengli Wei, Mengkai Zhu, Jiawen Chen, Wenwu Yu and Duxin Chen %
\thanks{Mengli Wei, Mengkai Zhu, Jiawen Chen and Duxin Chen are with the School of  Mathematics, Southeast University, Nanjing 210096, China (e-mail: weimengli@seu.edu.cn).}
\thanks{Wenwu Yu is with the School of Mathematics, Jiangsu Province Scientific Research Center for Applied Mathematics, Southeast University, Nanjing 210096, China. (e-mail: wwyu@seu.edu.cn).}
}
\maketitle

\begin{abstract}
Reducing communication in derivative-free decentralized learning requires controlling the disagreement accumulated over multiple local updates. This paper develops SPADE-DFL, a primal--dual method that allows the number of local function-value updates between neighbor exchanges to grow with the computation budget while preserving the nonprivate convergence order. For smooth nonconvex objectives under uniform query-moment bounds, the prescribed nonprivate schedule achieves a time-averaged stationarity and consensus bound of $\mathcal{O}(T^{-1/3})$ using only $\Theta(T^{2/3})$ communication rounds, where $T$ is the number of local updates per client. For private training, the accumulated data-dependent increment is isolated from the graph correction, allowing one protected state per client and round to generate all outgoing messages. We prove client-level differential privacy for the full interactive transcript and quantify the resulting optimization error over a finite horizon. Experiments on four classification tasks show that SPADE-DFL achieves higher mean test accuracy than existing decentralized learning methods.
\end{abstract}

\begin{IEEEkeywords}
Decentralized federated learning, differential privacy, derivative-free optimization, one-point estimator, linearized ADMM.
\end{IEEEkeywords}

\section{Introduction}
Decentralized federated learning (DFL) trains a shared model through neighbor exchanges while keeping data local~\cite{zehtabi2025decentralized}. Communication constraints motivate clients to perform several local updates before exchanging model information~\cite{wu2025localupdates, alghunaim2024led}.
The central question is how much communication local computation can replace without compromising progress toward the shared objective~\cite{ren2026communication}.
	
When gradients are unavailable, local computation relies on randomized function evaluations~\cite{ghadimi2013stochastic,ye2025hessian,huang2024zoadmm}.
Single-point methods extend this information model to decentralized learning~\cite{mhanna2023single, mhanna2024zero}. 
Local models can nevertheless drift apart during training, so additional queries offer progress at the cost of a growing coordination problem. 
Compression can reduce the information transmitted in an exchange,
but the number of exchanges remains part of the communication cost~\cite{song2022compressedgt,nassif2025def,he2023unbiased}. 
The question is therefore more specific.
\emph{How much communication can local loss evaluations replace while preserving the convergence order of collective learning?}
	
Local computation reduces the frequency of neighbor exchanges but also permits local models to drift apart before the next exchange. With single-point function-value information, the smoothing radius affects the approximation error of the local direction. The local training length, stepsize, smoothing radius and network gain therefore need to be selected jointly. Their interaction determines whether reducing the exchange frequency also reduces the total communication required to reach a given optimization accuracy.
	
Privacy adds to this dependence on communicated states. Reconstruction attacks show that model information can reveal local training data~\cite{guo2026gradient}. 
Distributed privacy mechanisms exploit graph structure~\cite{rizk2023privacy,allouah2024decor}, including
formulations based on noisy ADMM iterations~\cite{cyffers2023privateadmm}.
Each released state summarizes a local query sequence whose influence continues through subsequent neighbor updates. 
Its protective perturbation therefore alters the feedback driving later computation.
Privacy analysis follows this influence through the complete interaction using composition of successive releases~\cite{bun2016concentrated,mironov2017renyi}.
	
The construction separates the accumulated local learning increment from the correction determined by preceding exchanges. Decentralized formulations of the alternating direction method of multipliers (ADMM) express agreement through constraints on neighboring models~\cite{boyd2011distributed,shi2014linearized}. Linearization makes the local primal update explicit~\cite{ling2015dlm}, while curvature aided formulations permit local primal and dual steps without inner communication loops~\cite{li2023curvature}. Local training further extends the computation performed between exchanges~\cite{ren2026communication}. Reference estimates also provide a way to reuse information in zeroth-order optimization~\cite{gautam2024mezosvrg}. 
Building on these ideas, we construct local directions from a component memory that preserves the conditional mean of fresh single-point estimates. The ADMM correction is held fixed throughout each local stage. An exact message recursion relates accumulated local updates to subsequent network disagreement and supports the joint selection of the local training length and round gain.
The placement of privacy protection also matters since clipping can bias stochastic updates~\cite{koloskova2023clipping}. We therefore isolate the accumulated data dependent increment from the correction fixed by the preceding transcript. Clipping this increment before adding calibrated Gaussian noise produces one protected state per client and round from which all outgoing messages are constructed. Retaining the explicit graph correction makes the effect of each release visible in subsequent exchanges. The message recursion determines how the local stage can grow while preserving the convergence order under the nonprivate parameter schedule. For private training, a finite horizon bound accounts for clipping and Gaussian releases in the stationarity and consensus criterion.
The main contributions are as follows.
\begin{enumerate}
	\item \textbf{Communication savings from local loss evaluations.}
	We establish communication savings from local loss evaluations for smooth nonconvex objectives under uniform query moment control. The prescribed nonprivate schedule yields a time averaged stationarity and consensus bound of $\mathcal{O}(T^{-1/3}+\tau^2/T)$ after $T=K\tau$ local updates per client. The drift term permits $\tau=\Theta(T^{1/3})$ local updates per exchange while retaining the $\mathcal{O}(T^{-1/3})$ order. For a joint tolerance $\varepsilon_{\mathrm{stat}}$, this reduces the sufficient communication bound from $\mathcal{O}(\varepsilon_{\mathrm{stat}}^{-3})$ for the same method with $\tau=1$ to $\mathcal{O}(\varepsilon_{\mathrm{stat}}^{-2})$ rounds. 
		
	\item \textbf{Single-point local training and network dynamics.}
SPADE-DFL incorporates a component-memory single-point estimator into local training ADMM. Under uniform query-moment bounds, the memory correction preserves the conditional mean of fresh single-point estimates and admits a controlled second moment. An exact message recursion yields a modal representation of network disagreement. In the synchronized setting, the homogeneous disagreement modes are Schur stable exactly when $0<\tau\beta\eta\mu\rho\lambda_N<8/3$, where $\lambda_N$ is the largest graph Laplacian eigenvalue. This characterization determines the admissible round gain used in the local training schedule.
		
	\item \textbf{Protecting local computation through the state used for coordination.}
	We isolate the accumulated private update from the graph correction fixed by the observed history. One Gaussian release per client and round protects the clipped update, while all outgoing messages follow by deterministic processing. We prove differential privacy for the full interactive transcript under replacement of an entire client dataset, accounting for effects propagated through subsequent neighbor responses. A finite horizon analysis traces release perturbations through the feedback governing objective descent. The resulting bound quantifies the optimization cost of privacy under local drift, linking information disclosed during communication to collective learning accuracy.
\end{enumerate}
	
Sections~II through IV cover related work, the method and its analysis. Section~V presents the numerical study before Section~VI concludes the paper.

\section{Related Work}
	
\subsection{Communication in Decentralized Learning}
The benefit of local computation depends on whether progress made between exchanges survives the disagreement generated by heterogeneous objectives. Analyses of local gradient descent and gradient tracking make this dependence explicit~\cite{wu2025localupdates}. ProxSkip establishes communication acceleration through randomized synchronization for strongly convex problems~\cite{mishchenko2022proxskip}. Local exact diffusion incorporates bias correction into local training~\cite{alghunaim2024led}, while local training ADMM holds a neighbor correction fixed over several stochastic updates~\cite{ren2026communication}. 
A differentially private variant applies clipping and Gaussian
perturbation to stochastic gradients at each local step while
retaining one neighbor-exchange phase per round~\cite{ren2026ltadmmdp}.
Its analysis establishes privacy under single-record addition
or removal and a stationarity bound for nonconvex objectives.
The communication cost also depends on the mixing protocol. DSGD with CECA uses an exact consensus schedule~\cite{ding2023dsgdceca}, whereas tree based push pull limits the number of active neighbors~\cite{you2024btpp}. Related analyses show how heterogeneity correction improves the dependence of transient behavior on topology~\cite{yuan2023heterogeneity}. Work on DFL examines this relationship through aggregation weight optimization~\cite{zhai2026aggregation} and the stability of decentralized training~\cite{sun2026generalization}. Compression addresses the amount transmitted at each exchange, with its overall benefit depending on the convergence cost of smaller messages~\cite{he2023unbiased}. Compressed gradient tracking treats communication over directed networks~\cite{song2022compressedgt}. Differential error feedback reuses compression residuals~\cite{nassif2025def}, while BEER develops compression with gradient tracking for nonconvex objectives~\cite{zhao2022beer}. MoTEF uses momentum tracking to control stochastic error within an error feedback scheme~\cite{islamov2025motef}. For objectives that vary over time, compressed distributed methods connect communication to online regret~\cite{li2024onlinecompressed}.
	
\subsection{Learning from Function Values}
Zeroth-order optimization studies how function evaluations can provide enough directional information for learning~\cite{ghadimi2013stochastic}. Single-point distributed methods use one noisy evaluation per update~\cite{mhanna2023single,mhanna2024zero}. The accuracy of the inferred direction can also be improved through curvature information~\cite{ye2025hessian} or stochastic ADMM constructions with explicit query complexity~\cite{huang2024zoadmm}. In networked problems, compressed stochastic methods incorporate transmission error into the analysis~\cite{hua2026distributed}. Quantized gradient tracking with deterministic zeroth-order estimates yields linear convergence under the Polyak \L{}ojasiewicz condition~\cite{xu2024quantized}. Function evaluations also change the implementation cost of local learning. FedZO performs several local updates between server aggregations~\cite{fang2022communication}, while DeComFL represents communicated information by scalars to remove the dependence of the payload on model dimension~\cite{li2025decomfl}. For language model adaptation, MeZO avoids backpropagation by estimating directions from paired forward evaluations~\cite{malladi2023mezo}. Its SVRG extension uses reference information to improve the optimization process~\cite{gautam2024mezosvrg}. These developments connect the choice of estimator to the work performed by the local solver. For decentralized solvers, ADMM expresses coordination through equality constraints on neighboring models~\cite{boyd2011distributed,shi2014linearized}. Linearization replaces the local primal solve with an explicit approximation~\cite{ling2015dlm}. Local training allows this computation to continue between exchanges~\cite{ren2026communication}. Curvature aided methods use gradient directions or Newton approximations within local primal and dual updates without inner communication loops for convex composite objectives~\cite{li2023curvature}.

\begin{table*}[!t]
    \centering
    \caption{Selected algorithmic features of the methods discussed in Related Work.}
    \label{tab:related_comparison}
    \begingroup
    \footnotesize
    \setlength{\tabcolsep}{2.5pt}
    \renewcommand{\arraystretch}{1.12}
    \begin{tabular*}{\textwidth}{@{\extracolsep{\fill}}lcccccc@{}}
        \toprule
        \multirow{2}{*}{Method}
        & Communication & Local & Single-point & Component & ADMM & Differential \\
        & reduction & updates & oracle & memory & updates & privacy \\
        \midrule

        Local training~\cite{wu2025localupdates,alghunaim2024led,mishchenko2022proxskip,fang2022communication,li2025decomfl}
        & $\checkmark$ & $\checkmark$ & & & & \\

        LT-ADMM~\cite{ren2026communication}
        & $\checkmark$ & $\checkmark$ & & & $\checkmark$ & \\

        LT-ADMM-VR~\cite{ren2026communication}
        & $\checkmark$ & $\checkmark$ & & $\checkmark$ & $\checkmark$ & \\

        LT-ADMM-DP~\cite{ren2026ltadmmdp}
        & $\checkmark$ & $\checkmark$ & & & $\checkmark$ & $\checkmark$ \\

        Communication-saving methods~\cite{ding2023dsgdceca,you2024btpp,song2022compressedgt,nassif2025def,zhao2022beer,islamov2025motef,li2024onlinecompressed,hua2026distributed,xu2024quantized}
        & $\checkmark$ & & & & & \\

        Nonprivate optimization~\cite{yuan2023heterogeneity,zhai2026aggregation,ye2025hessian,malladi2023mezo,gautam2024mezosvrg}
        & & & & & & \\

        Single-point methods~\cite{mhanna2023single,mhanna2024zero}
        & & & $\checkmark$ & & & \\

        ADMM methods~\cite{huang2024zoadmm,shi2014linearized,ling2015dlm}
        & & & & & $\checkmark$ & \\

        ADMM variants~\cite{li2023curvature,li2024adqsp}
        & $\checkmark$ & & & & $\checkmark$ & \\

        Private learning~\cite{rizk2023privacy,allouah2024decor,wang2026ping,cyffers2022muffliato,zhang2024dpzero,gong2025pazo}
        & & & & & & $\checkmark$ \\

        Private ADMM~\cite{cyffers2023privateadmm}
        & & & & & $\checkmark$ & $\checkmark$ \\

        DP-FedSAM (top $k$)~\cite{shi2025dpfedsam}
        & $\checkmark$ & $\checkmark$ & & & & $\checkmark$ \\

        \midrule
        \textbf{SPADE-DFL}
        & $\checkmark$ & $\checkmark$ & $\checkmark$
        & $\checkmark$ & $\checkmark$ & $\checkmark$ \\
        \bottomrule
    \end{tabular*}
    \endgroup
\end{table*} 
    
\subsection{Privacy across Distributed Interactions}
Privacy in decentralized learning depends on what an observer can infer from a history of exchanges. Graph homomorphic noise connects the protection of communicated states to learning performance~\cite{rizk2023privacy}. Adaptive differentially quantized subspace perturbation exploits the consensus structure to address privacy with compressed communication~\cite{li2024adqsp}. Positive incentive noise designs use graph interactions to regulate the resulting utility~\cite{wang2026ping}. Muffliato analyzes privacy amplification under a pairwise network observation model~\cite{cyffers2022muffliato}. DECOR instead constructs Gaussian perturbations that cancel across neighboring clients using shared secret randomness~\cite{allouah2024decor}. Private ADMM connects iterative optimization to a noisy fixed point formulation~\cite{cyffers2023privateadmm}. Concentrated privacy and R\'enyi privacy provide composition rules for successive releases~\cite{bun2016concentrated,mironov2017renyi}. The effect of a release also depends on how the local update is prepared. Clipping can introduce persistent stochastic bias~\cite{koloskova2023clipping}. DP-FedSAM examines private local training through sharpness aware optimization with update sparsification~\cite{shi2025dpfedsam}. For learning from function values, DPZero develops private model adaptation without backpropagation~\cite{zhang2024dpzero}. PAZO uses public data to guide the private gradient approximation under a similarity assumption between the two data sources~\cite{gong2025pazo}.

SPADE-DFL quantifies the communication savings attainable from local single-point function evaluations. Under uniform query-moment bounds, the prescribed nonprivate schedule permits
the local training length to grow with the computation budget while preserving the stationarity and consensus convergence order. For private training, the accumulated local update is clipped and perturbed once per round, and client-level privacy is established for the full interactive transcript under replacement of an entire fixed-size local dataset.
Table~\ref{tab:related_comparison} summarizes these distinctions from the methods discussed above.

\section{Problem Formulation and Methodology}
\subsection{Problem Formulation}
	Consider a decentralized empirical-risk minimization system in which training samples remain distributed across $N\ge2$ clients and no central server has direct access to their union \cite{mcmahan2017communication,kairouz2021advances}.
	The $N$ clients are connected by a graph $\mathcal{G}=(\mathcal{V},\mathcal{E})$, where $\mathcal{V}=\{1,\ldots,N\}$ and $\mathcal{E}$ is the edge set.
	Client $i$ communicates only with neighbors in $\mathcal{N}_i:=\{j:\{i,j\}\in\mathcal{E}\}$ and stores $\mathcal D_i=\{\xi_{i,h}\}_{h=1}^{m_i}$, where the sample $\xi_{i,h}$ induces the component loss $f_{i,h}:\R^d\to\R$.
	The client-specific empirical risk and the client-uniform learning objective are defined respectively as
	\begingroup
	\allowdisplaybreaks
	\begin{align}
	f_i(x)&=\frac{1}{m_i}\sum_{h=1}^{m_i}f_{i,h}(x),&
	F(x)&=\frac{1}{N}\sum_{i=1}^{N}f_i(x),
	\label{eq:objectives}
	\end{align}
	\endgroup
	which separates the within-client empirical average from the network-level aggregation. 
	Data heterogeneity and the local sample size are absorbed into $f_i$. 
	The component losses $f_{i,h}$, local empirical risks $f_i$ and aggregate objective $F$ may be nonconvex.
	At a queried model point, client $i$ observes component loss values but has no access to $\nabla f_{i,h}$ or $\nabla f_i$.
	This information pattern arises when the local objective is exposed through the external black-box interface or its derivatives are computationally prohibitive~\cite{mhanna2023single, huang2024zoadmm, ye2025hessian}. 
	
	To implement the common-model objective through neighbor exchanges, assign a local replica $x_i$ to each client. The resulting consensus formulation can be obtained from~\cite{boyd2011distributed} as
	\begingroup
	\allowdisplaybreaks
	\begin{align}
	\min_{\{x_i\}_{i=1}^{N}}\quad
	\frac{1}{N}\sum_{i=1}^{N}f_i(x_i)
	\quad\text{s.t.}\quad
	x_i=x_j,\quad \{i,j\}\in\mathcal E .
	\label{eq:consensus-problem}
	\end{align}
	\endgroup
	Since \eqref{eq:consensus-problem} separates local objectives and couples their replicas only along graph edges, connectivity makes the edge constraints equivalent to $x_1=\cdots=x_N$, thereby recovering $\min_x F(x)$ and enabling decentralized ADMM through local loss evaluations and neighbor messages. The required objective regularity and network conditions are stated in Assumptions~\ref{ass:objective} and~\ref{ass:network}, respectively.
	
	\begin{assumption}[\hspace{-0.02cm}\cite{ghadimi2013stochastic,mhanna2023single}]
		\label{ass:objective}
		Each component loss is continuously differentiable and $L_f$-smooth, i.e., $\|\nabla f_{i,h}(x)-\nabla f_{i,h}(y)\|\le L_f\|x-y\|$, $\forall x,y\in\R^d$. Furthermore, $F$ is bounded below by $F_\star>-\infty$.
	\end{assumption}
	
	\begin{assumption}[\hspace{-0.02cm}\cite{shi2014linearized}]\label{ass:network}
		The communication graph $\mathcal G$ is fixed, undirected, unweighted, and connected. Messages are exchanged synchronously over its edges. 
	\end{assumption}
	
	For the graph representation, orient each edge arbitrarily and let $B\in\R^{N\times|\mathcal E|}$ be the resulting incidence matrix. 
	For an edge $e=(i,j)$, set $B_{i,e}=1$, $B_{j,e}=-1$, and all other entries in column $e$ to zero.
	Then $L_{\mathcal G}=BB^\top$ and $0=\lambda_1<\lambda_2\le\cdots\le\lambda_N$ define the graph Laplacian and its ordered eigenvalues, respectively. 
	Suppressing unambiguous Kronecker products with $I_d$, denote $\Pi=\left(I_N-\frac{1}{N}\one\one^\top\right)\otimes I_d$.
	Before specifying the primal--dual recursion, we characterize the single-point information available at a local model and establish the properties of a memory-based direction formed from such queries.
	
	\begingroup
	\linespread{0.94}\selectfont
	\setlength{\abovedisplayskip}{1pt plus 0.5pt minus 0.5pt}
	\setlength{\belowdisplayskip}{1pt plus 0.5pt minus 0.5pt}
	\setlength{\abovedisplayshortskip}{0.5pt plus 0.5pt}
	\setlength{\belowdisplayshortskip}{0.5pt plus 0.5pt}
	
	\subsection{Single-Point Estimation and Memory Properties}
	\label{subsec:single-point-estimation}
	Each sampled component provides one noisy function value from which a derivative-free direction can be formed. 
	Let $\mathcal F$ denote the information available before a query at an $\mathcal F$-measurable point $x$. Client $i$ observes $\widetilde f_{i,h}(x+\mu u)=f_{i,h}(x+\mu u)+\zeta$, where $\mu>0$ is the smoothing radius, $u\in\R^d$ is the current query direction and $\zeta$ is the query noise, respectively.
	The following assumption adapts the one-point perturbation model to the generated query points.
	
	\begin{assumption}[\hspace{-0.02cm}\cite{mhanna2023single, mhanna2024zero}]\label{ass:oracle}
		For each generated query at an $\mathcal F$-measurable point, there exist positive constants $c_u,R_u,M_2>0$ such that $\E\!\left[u\mid\mathcal F\right]=0$, $\E\!\left[uu^\top\mid\mathcal F\right]=c_u I_d$ and $\|u\|\le R_u$ almost surely. 
		Moreover, $\E\!\left[\zeta\mid\mathcal F,u\right]=0$ and $\E\!\left[ |\widetilde f_{i,h}(x+\mu u)|^2\mid\mathcal F \right]\le M_2$.
	\end{assumption}

    The query-moment condition admits smooth objectives on an
unconstrained parameter domain. If $|f_{i,h}(z)|\le B_f$
for every $i,h$ and $z\in\mathbb R^d$, and
$\mathbb E[\zeta^2\mid\mathcal F,u]\le\sigma_\zeta^2$,
conditional centering of the query noise gives $\mathbb E\!\left[
|\widetilde f_{i,h}(x+\mu u)|^2
\mid\mathcal F
\right]
\le B_f^2+\sigma_\zeta^2$.
A concrete example is the squared probability loss
$f(x)=\frac12\|\operatorname{softmax}(Wa)-y\|^2$,
where $x=\operatorname{vec}(W)$, the feature vector $a$
has uniformly bounded norm, and $y$ is a one-hot label.
This loss is globally smooth and takes values in $[0,1]$
throughout the parameter space. Thus, bounded smooth
losses with uniformly bounded conditional noise variance
provide a query-moment bound independent of the query
locations and smoothing radii. Taking $B_f$ and
$\sigma_\zeta$ independent of the training budget also
provides the uniformity required by the convergence rates.
	Under Assumption~\ref{ass:oracle}, the local single-point estimator is
	\begin{align}
	q_{i,h}(x;u,\zeta)
	=\frac{1}{c_u}u\widetilde f_{i,h}(x+\mu u).
	\label{eq:q-generic}
	\end{align}
Each evaluation of (3) requires one function value. The estimator is used without division by $\mu$. At local step $t$ of round $k$, client $i$ evaluates (3) at $x=\phi_{i,k}^{t}$ with smoothing radius $\mu_k$ and direction $u_{i,h,k}^{t}$; the resulting query is denoted by $q_{i,h,k}^{t}$.
	
	In round $k$, client $i$ starts from the released model $x_{i,k}$ and performs $\tau_i$ local updates with mini-batch size $1\le b_i\le m_i$. Set $\phi_{i,k}^0=x_{i,k}$. To reuse component-level single-point information, one memory entry is initialized for every $h\in\{1,\ldots,m_i\}$ by setting $r_{i,h,k}^0=x_{i,k}$, drawing a reference direction $u_{i,h,k}^{\mathrm{ref},0}$, and storing
	\begin{align}
	a_{i,h,k}^0
	&=\frac{1}{c_u}u_{i,h,k}^{\mathrm{ref},0}
	\widetilde f_{i,h}\left(
	r_{i,h,k}^0+\mu_k u_{i,h,k}^{\mathrm{ref},0}
	\right).
	\label{eq:memory-initialization}
	\end{align}
	The initial table average is $\bar a_{i,k}^0:=m_i^{-1}\sum_{h=1}^{m_i}a_{i,h,k}^0$, and this initialization uses one function query per component at the beginning of the round.
	
	Let $\mathcal F_{k,t}$ contain the public transcript, released and auxiliary states, frozen penalties, current local iterates, memory table, and all randomness revealed before local step $(k,t)$. The current mini-batches, query directions, and query noises are excluded. Conditional on $\mathcal F_{k,t}$, each $\mathcal B_{i,k}^t$ is uniform over the subsets of size $b_i$ and is independent of the current query randomization; current batches are independent across clients. With $\bar a_{i,k}^t:=m_i^{-1}\sum_{h=1}^{m_i}a_{i,h,k}^t$, define
	\begin{align}
	v_{i,k}^t
	&=\frac{1}{|\mathcal B_{i,k}^t|}\sum_{h\in\mathcal B_{i,k}^t}
	\left(q_{i,h,k}^t-a_{i,h,k}^t\right)
	+\bar a_{i,k}^t.
	\label{eq:memory-estimator}
	\end{align}
	After forming $v_{i,k}^t$, each sampled pair is replaced by $(r_{i,h,k}^{t+1},a_{i,h,k}^{t+1}):=(\phi_{i,k}^t,q_{i,h,k}^t)$, while unsampled entries remain unchanged and $\bar a_{i,k}^{t+1}$ is recomputed.
	
	To state the oracle and memory errors, define
	\[
	\begin{aligned}
	r_{i,h}(x,\mu)
	&:=
	\mathbb E\!\left[
	q_{i,h}(x;u,\zeta)\mid\mathcal F
	\right]
	-\mu\nabla f_{i,h}(x),\\
	r_{i,k}^t
	&:=
	\mathbb E\!\left[
	v_{i,k}^t\mid\mathcal F_{k,t}
	\right]
	-\mu_k\nabla f_i(\phi_{i,k}^t).
	\end{aligned}
	\]
	For compactness, set $c_r:=\frac{L_fR_u^3}{2c_u}$ and $Q^2:=\frac{R_u^2M_2}{c_u^2}$.
	The two statements below characterize the mean and
	moment properties of the unnormalized single-query estimator under
	Assumption~\ref{ass:oracle} and those of the memory recursion in
	\eqref{eq:memory-initialization}--\eqref{eq:memory-estimator}, respectively.
	
	\begin{lemma}
		\label{lem:oracle_memory}
		Under Assumptions~\ref{ass:objective} and~\ref{ass:oracle}, the following statements hold.
		\begin{enumerate}
			\item[(i)] For each $\mathcal F$-measurable query point $x$, it holds that $\mathbb E\!\left[q_{i,h}(x;u,\zeta)\mid\mathcal F\right]=\mu\nabla f_{i,h}(x)+r_{i,h}(x,\mu)$, where $\|r_{i,h}(x,\mu)\|\le c_r\mu^2$, and $\mathbb E\!\left[\|q_{i,h}(x;u,\zeta)\|^2\right]\le Q^2$.
			
			\item[(ii)] For each local step $(k,t)$, it holds that $\mathbb E\!\left[v_{i,k}^t\mid\mathcal F_{k,t}\right]=\mu_k\nabla f_i(\phi_{i,k}^t)+r_{i,k}^t$, where $\|r_{i,k}^t\|\le c_r\mu_k^2$, while $m_i^{-1}\sum_{h=1}^{m_i}\mathbb E\!\left[\|a_{i,h,k}^t\|^2\right]\le Q^2$ and $\mathbb E\!\left[\|v_{i,k}^t\|^2\right]\le 9Q^2$.
		\end{enumerate}
	\end{lemma}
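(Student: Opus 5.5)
For part (i), I would condition on $\mathcal F$ and first remove the query noise by the tower property. Since $\E[\zeta\mid\mathcal F,u]=0$, we get $\E[q_{i,h}\mid\mathcal F]=c_u^{-1}\E[u f_{i,h}(x+\mu u)\mid\mathcal F]$. Next, expand $f_{i,h}(x+\mu u)=f_{i,h}(x)+\mu\langle\nabla f_{i,h}(x),u\rangle+\rho$, where $L_f$-smoothness gives $|\rho|\le \frac{L_f}{2}\mu^2\|u\|^2\le\frac{L_f}{2}\mu^2R_u^2$. The three terms are handled separately:
\begin{itemize}
\item the zeroth-order term vanishes because $\E[u\mid\mathcal F]=0$ and $x$ is $\mathcal F$-measurable;
\item the linear term gives $c_u^{-1}\mu\,\E[uu^\top\mid\mathcal F]\nabla f_{i,h}(x)=\mu\nabla f_{i,h}(x)$;
\item the remainder contributes $c_u^{-1}\E[u\rho\mid\mathcal F]$, with norm at most $c_u^{-1}R_u\cdot\frac{L_f}{2}\mu^2R_u^2=c_r\mu^2$.
\end{itemize}
For the second moment, $\|q_{i,h}\|^2\le c_u^{-2}R_u^2|\widetilde f_{i,h}|^2$ almost surely. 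Taking the conditional expectation and using the query-moment bound gives $\E[\|q_{i,h}\|^2\mid\mathcal F]\le R_u^2M_2/c_u^2=Q^2$, and the unconditional bound follows.

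For part (ii), I would use that the current batch is uniform and independent of the query randomization given $\mathcal F_{k,t}$, while the table $a^t_{i,h,k}$ is $\mathcal F_{k,t}$-measurable. Unbiasedness of uniform sampling gives
\[
\E\Bigl[\tfrac{1}{b_i}\textstyle\sum_{h\in\mathcal B}(q_{i,h,k}^t-a_{i,h,k}^t)\Bigm|\mathcal F_{k,t}\Bigr]=\tfrac{1}{m_i}\textstyle\sum_h\bigl(\E[q_{i,h,k}^t\mid\mathcal F_{k,t}]-a_{i,h,k}^t\bigr).
\]
Adding $\bar a_{i,k}^t$ cancels the memory term. Here $\E[q_{i,h,k}^t\mid\mathcal F_{k,t}]$ means the mean of a fresh query at $\phi_{i,k}^t$ for each component $h$, and part (i) applies to it with $\mathcal F=\mathcal F_{k,t}$. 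So the mean of $v_{i,k}^t$ is the average over $h$ of $\mu_k\nabla f_{i,h}+r_{i,h}$, which equals $\mu_k\nabla f_i(\phi_{i,k}^t)+r_{i,k}^t$ with $\|r_{i,k}^t\|\le c_r\mu_k^2$ by the triangle inequality.

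For the table moment I would argue by induction on $t$. Each entry $a^t_{i,h,k}$ is some earlier query $q$, either the initialization query or one stored at an earlier step. Part (i), applied with the filtration at the time that query was made, gives $\E\|a^t_{i,h,k}\|^2\le Q^2$ entrywise; averaging over $h$ gives the claimed bound. The one subtlety is that the time at which an entry was stored is random. I would handle this by writing $a^t_{i,h,k}=\sum_s \mathbf 1\{\text{last update at }s\}\,q^{s}_{i,h,k}$. Each indicator is determined by batches that are independent of the query randomization, so conditioning on the batch history before applying part (i) still yields the bound $Q^2$.

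Finally, for $\E\|v_{i,k}^t\|^2$ I would write $v=\frac1{b_i}\sum_{\mathcal B}q-\frac1{b_i}\sum_{\mathcal B}a+\bar a$ and use $\|x+y+z\|^2\le3(\|x\|^2+\|y\|^2+\|z\|^2)$. By Jensen each piece is bounded by an average of squared norms. For the sampled queries this gives $\E\le Q^2$ by part (i). For the sampled memory entries, uniform sampling turns the expectation into the table average, which is at most $Q^2$. For $\bar a$, Jensen again bounds it by the table average. The total is at most $9Q^2$.

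I expect the main obstacle to be the bookkeeping in the table-moment step: the stored entries were queried at past iterates, and the query points $\phi^s$ themselves depend on earlier queries. To keep this honest I would use the conditional moment bound of Assumption~\ref{ass:oracle} at each query time, which is uniform in the query location, together with the independence of the batch selection from the query randomization.
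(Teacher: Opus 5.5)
Your proposal is correct and follows the paper's proof essentially step for step. Part (i) uses the same Taylor expansion and conditional moment identities. Part (ii) uses the same uniform-sampling cancellation of $\bar a_{i,k}^t$ and the same three-term Jensen bound for $\E[\|v_{i,k}^t\|^2]\le 9Q^2$. Your last-update-time decomposition of the table entries is the unrolled form of the paper's one-step refresh recursion $\E[\|a_{i,h,k}^{t+1}\|^2]\le(1-b_i/m_i)\E[\|a_{i,h,k}^{t}\|^2]+(b_i/m_i)Q^2$, and you handle it correctly.
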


    Lemma~\ref{lem:oracle_memory}(i) characterizes the conditional mean, smoothing remainder, and second moment of a component query. Lemma~\ref{lem:oracle_memory}(ii) shows that the memory correction preserves the conditional mean of fresh single-point estimates with a controlled second moment. The next subsection incorporates this direction into the local training ADMM recursion.
	
	\subsection{SPADE-DFL}
	\label{subsec:algorithm-design}
	
	\begin{algorithm}[!t]
		\caption{SPADE-DFL at client $i$}
		\label{alg:spade}
        \small
		\begin{algorithmic}[1]
			\REQUIRE Local data $\mathcal D_i$; neighbor set $\mathcal N_i$; public model $x_0$; parameters $\beta,\rho,\tau_i,b_i$; schedules $\{\eta_k,\mu_k,R_k^{\mathrm{clip}},\sigma_{\mathrm{dp},k}\}_{k=0}^{K-1}$.
			\ENSURE Private released model $x_{i,K}$ and auxiliary states $\{z_{ij,K}\}_{j\in\mathcal N_i}$.
			\FOR{$k=0,\ldots,K-1$}
			\STATE Set $\phi_{i,k}^0=x_{i,k}$, freeze $p_{i,k}$ by \eqref{eq:penalty}, and initialize the derivative-free estimator memory by \eqref{eq:memory-initialization}.
			\FOR{$t=0,\ldots,\tau_i-1$}
			\STATE Sample $\mathcal B_{i,k}^t$, evaluate one $q_{i,h,k}^t$ per selected component using \eqref{eq:q-generic}, and form $v_{i,k}^t$ by \eqref{eq:memory-estimator}.
			\STATE Update $\phi_{i,k}^{t+1}$ by \eqref{eq:local-update}, replace the sampled memory pairs, and recompute $\bar a_{i,k}^{t+1}$.
			\ENDFOR
			\STATE Set
			\(y_{i,k+1}=\phi^{\tau_i}_{i,k}\) and
			\(s_{i,k}=-\eta_k\sum_{t=0}^{\tau_i-1}v^t_{i,k}\).
			
			\STATE Clip the accumulated local update, where
			\(\operatorname{clip}_{R}(0)=0\):
			\[
			\begin{aligned}
			\bar{s}_{i,k}
			=\operatorname{clip}_{R_k^{\mathrm{clip}}}(s_{i,k}) =s_{i,k}\min\!\left\{ 1,\frac{R_k^{\mathrm{clip}}}{\lVert s_{i,k}\rVert} \right\}.
			\end{aligned}
			\]
			
			\STATE Independently sample the Gaussian release noise as
			\[
			\nu_{i,k}\sim
			\mathcal{N}\!\left(
			0,\sigma_{\mathrm{dp},k}^{2}I_d
			\right).
			\]
			
			\STATE Compute and release \(x_{i,k+1}\) by
			\eqref{eq:private-release}.
			\STATE Exchange the messages in \eqref{eq:message} and update all $z_{ij,k+1}$ by \eqref{eq:z-update}.
			\ENDFOR
		\end{algorithmic}
	\end{algorithm}
	\endgroup

        \begin{figure*}[!t]
		\centering
		\includegraphics[width=\textwidth]{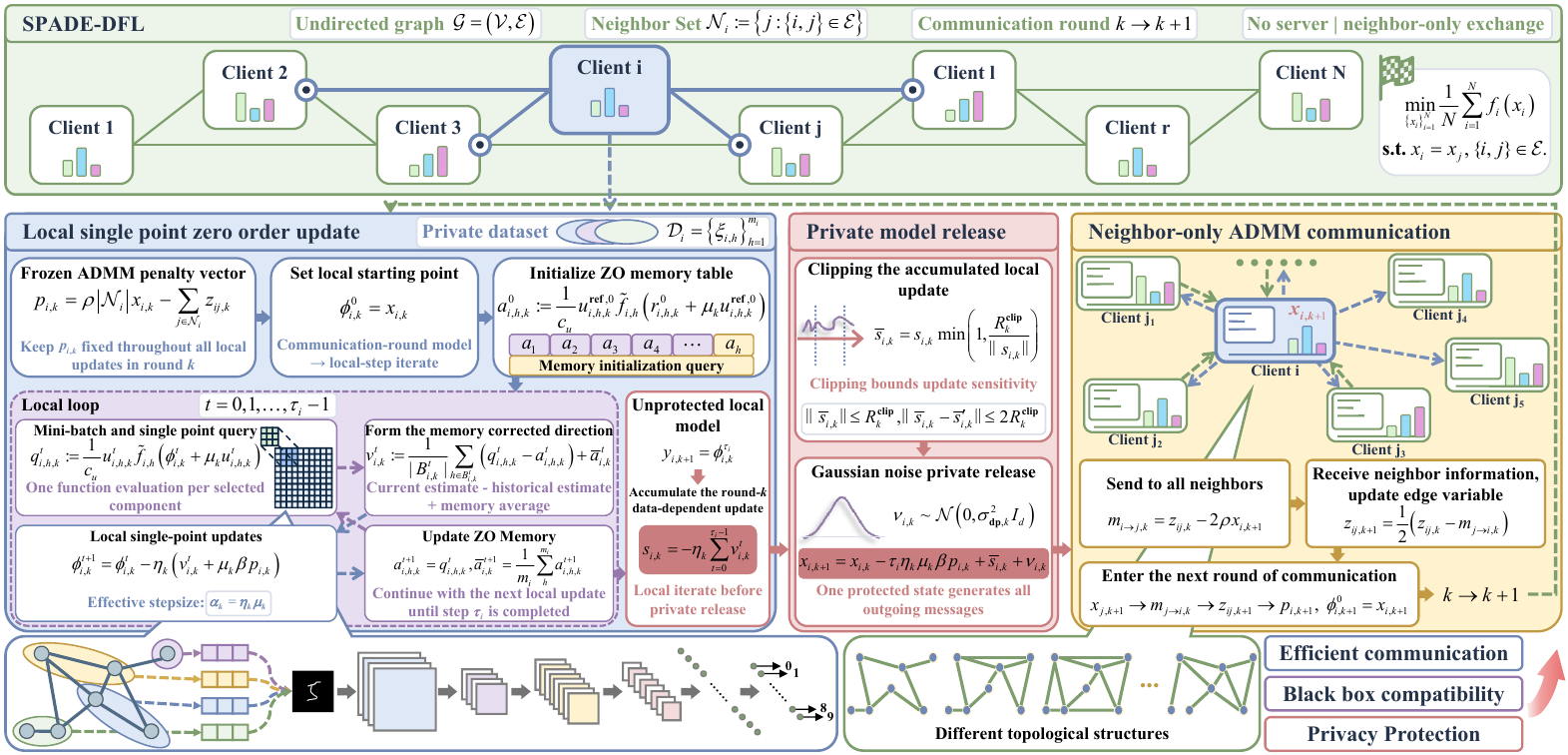}
		\caption{Workflow of SPADE-DFL. In round $k$, each client freezes its ADMM penalty, initializes the one-point estimator memory, and performs $\tau_i$ local derivative-free control-variate updates. The accumulated data-dependent local update is clipped and perturbed once before release, after which all neighbor messages and auxiliary-variable updates are computed from the released state.}
		\label{fig:spade-workflow}
	\end{figure*}
    
	Repeated neighbor synchronization can dominate the cost of derivative-free decentralized learning. To reduce the communication frequency, SPADE-DFL freezes the ADMM correction during a local stage, performs multiple single-query updates, and communicates only after the stage is completed. Consequently, client $i$ executes $\tau_i$ local updates between two consecutive neighbor exchanges, which is illustrated in Algorithm~\ref{alg:spade}.
	Round $k$ uses step size $\eta_k>0$ and smoothing radius $\mu_k>0$, while $\rho,\beta>0$ determine the ADMM correction. 
	All clients use $x_{i,0}=x_0$ and $z_{ij,0}=\rho x_0$ for every $j\in\mathcal N_i$, hence $z_{ij,0}+z_{ji,0}=\rho(x_{i,0}+x_{j,0})$ on each edge.
	The initialization, graph, and parameter schedules are public and data-independent. Fresh local randomization is independent across clients, and release noise is independent of all pre-release variables.

	\subsubsection{Local Derivative-Free Training}
	
	At the beginning of round $k$, client $i$ sets $\phi_{i,k}^0=x_{i,k}$ and freezes the neighbor-dependent ADMM penalty at the current released state as
	\begin{align}
	p_{i,k}
	=\rho|\mathcal N_i|x_{i,k}
	-\sum_{j\in\mathcal N_i}z_{ij,k}.
	\label{eq:penalty}
	\end{align}
	The same $p_{i,k}$ is used throughout all $\tau_i$ local updates and the local model evolves as
	\begin{align}
	\phi_{i,k}^{t+1}
	=\phi_{i,k}^{t}
	-\eta_k\left(v_{i,k}^t+\mu_k\beta p_{i,k}\right),
	\label{eq:local-update}
	\end{align} 
	Let $\alpha_k=\eta_k\mu_k$ and
$\widehat v_{i,k}^{t}=v_{i,k}^{t}/\mu_k$.
The local update equivalently reads $\phi_{i,k}^{t+1}
=
\phi_{i,k}^{t}
-
\alpha_k
\left(
\widehat v_{i,k}^{t}
+
\beta p_{i,k}
\right)$.
Thus, $\alpha_k$ is the effective descent stepsize and
$\alpha_k\beta$ is the coefficient of the frozen ADMM correction
per local step. Under the synchronized parameters settings, $a=\tau\alpha\beta$ is the aggregate network gain
over one communication round. After $\tau_i$ updates, define $y_{i,k+1}:=\phi_{i,k}^{\tau_i}$ and $s_{i,k}:=-\eta_k\sum_{t=0}^{\tau_i-1}v_{i,k}^t$.
	Since $p_{i,k}$ is fixed within the round, it follows that $y_{i,k+1}=x_{i,k}-\tau_i\eta_k\mu_k\beta p_{i,k}+s_{i,k}$. 
	Therefore, the state $y_{i,k+1}$ is intermediate and unreleased.

	\subsubsection{Private Release and Decentralized Communication}
	Before communication, SPADE-DFL sanitizes the accumulated data-dependent update $s_{i,k}$ once, while leaving the explicit ADMM correction unchanged. The release noise is independent of all variables generated before the corresponding release and is conditionally independent across clients and rounds. Specifically, client \(i\) projects the accumulated data-dependent update \(s_{i,k}\) onto the closed Euclidean ball centered at the origin with radius \(R_k^{\mathrm{clip}}\), and denotes the resulting clipped vector by \(\bar{s}_{i,k}\). It then samples a zero-mean Gaussian vector \(\nu_{i,k}\) with covariance \(\sigma_{\mathrm{dp},k}^{2}I_d\). The private released state is
	\begin{align}
	x_{i,k+1} = x_{i,k} -\tau_i\eta_k\mu_k\beta p_{i,k} +\bar{s}_{i,k} +\nu_{i,k}.
	\label{eq:private-release}
	\end{align}
	Define \(e^{\mathrm{cl}}_{i,k}:=\bar{s}_{i,k}-s_{i,k}\), which gives
	the equivalent decomposition
	\(x_{i,k+1}=y_{i,k+1}+e^{\mathrm{cl}}_{i,k}+\nu_{i,k}\).
	The released state is then used to form each outgoing message as
	\begin{align}
	m_{i\to j,k}=z_{ij,k}-2\rho x_{i,k+1}, j\in\mathcal N_i.
	\label{eq:message}
	\end{align}
	After receiving the corresponding message from neighbor $j$, client $i$ updates the directed auxiliary state as
	\begin{align}
	z_{ij,k+1}
	=\frac{1}{2}\left(z_{ij,k}-m_{j\to i,k}\right).
	\label{eq:z-update}
	\end{align}
Client \(i\) completes \(\tau_i\) local updates before each neighbor exchange, with all outgoing messages constructed from the released state \(x_{i,k+1}\). 
Communication is thus amortized over multiple local updates and each directed edge carries one \(d\)-dimensional ADMM message per round, the complete workflow of which is summarized in~Fig.~\ref{fig:spade-workflow}.

\begin{spadecompact}
\section{Theoretical Analysis}
The exact message recursion determines the graph correction from the released history. Conditioning on this history gives the release law used for privacy accounting. The disagreement and descent bounds then yield the communication and query budgets.
		
\subsection{Exact Message Recursion and Transcript Privacy}
Variables without node or edge subscripts denote stacked vectors and Kronecker products with $I_d$ are omitted when unambiguous. 
For each oriented edge $e=(i,j)$, define $\omega_{e,k}:=(z_{ij,k}-z_{ji,k})/2$. Set $\omega_{-1}:=0$ and $\lambda_k:=-B\omega_{k-1}$.
These variables express the neighbor exchanges as a recursion for the graph correction.		
\begin{lemma}\label{lem:message}
Under the common initialization in Algorithm~\ref{alg:spade}, the message rules \eqref{eq:message}--\eqref{eq:z-update} satisfy
\begin{align}
\omega_{k+1}&=\omega_k-\frac{\rho}{2}B^\top x_{k+1}, \label{eq:omega-update}\\
p_k&=\rho L_{\mathcal G}x_k-B\omega_{k-1}. \label{eq:p-omega}
\end{align}
Equivalently,
\begin{align}
p_k&=\rho L_{\mathcal G}x_k+\lambda_k, \label{eq:p-lambda}\\
\lambda_{k+1}&=\lambda_k+\frac{\rho}{2}L_{\mathcal G}x_k.\label{eq:lambda-update}
\end{align}
Moreover, $\lambda_k\in\range(B)$, $\one^\top\lambda_k=0$ and $\one^\top p_k=0$.
\end{lemma}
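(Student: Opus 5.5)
The plan is to work edge by edge with two scalar-per-edge combinations of the directed auxiliary states: the edge sum $z_{ij,k}+z_{ji,k}$ and the edge difference $2\omega_{e,k}=z_{ij,k}-z_{ji,k}$. The first step establishes an invariant for the sum, the second gives the recursion for $\omega$, and the third reconstructs $p_k$ from the two. The $\lambda$ form and the orthogonality claims then follow from the incidence-matrix identities $L_{\mathcal G}=BB^\top$ and $\one^\top B=0$.

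\textbf{Step 1 (edge-sum invariant).} Substituting the message \eqref{eq:message} into \eqref{eq:z-update} gives
\[
z_{ij,k+1}=\tfrac12\bigl(z_{ij,k}-z_{ji,k}\bigr)+\rho x_{j,k+1}.
\]
Adding this to the symmetric expression for $z_{ji,k+1}$ cancels the difference terms. The result is $z_{ij,k+1}+z_{ji,k+1}=\rho(x_{i,k+1}+x_{j,k+1})$ for every $k\ge0$. At $k=0$ the same identity holds by the common initialization $z_{ij,0}=\rho x_0$, $x_{i,0}=x_0$. So the sum identity holds for all $k\ge0$.

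\textbf{Step 2 (difference recursion).} For an oriented edge $e=(i,j)$, subtracting the two updates gives
\[
z_{ij,k+1}-z_{ji,k+1}=(z_{ij,k}-z_{ji,k})-\rho(x_{i,k+1}-x_{j,k+1}).
\]
Dividing by two yields $\omega_{e,k+1}=\omega_{e,k}-\tfrac{\rho}{2}(B^\top x_{k+1})_e$, which is \eqref{eq:omega-update}. I also check the base case. The initialization gives $\omega_0=0$ and consensus gives $B^\top x_0=0$. Hence \eqref{eq:omega-update} holds at $k=-1$ with the convention $\omega_{-1}=0$. This consistency at $k=0$ is the one point I expect to need care, since \eqref{eq:p-omega} at $k=0$ uses $\omega_{-1}$.

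\textbf{Step 3 (reconstruct $p_k$).} Combining Steps 1 and 2, for each edge $e$ incident to $i$ with neighbor $j$ we have
\[
z_{ij,k}=\tfrac{\rho}{2}(x_{i,k}+x_{j,k})+B_{i,e}\,\omega_{e,k}.
\]
The sign $B_{i,e}$ is $+1$ when $i$ is the tail of $e$ and $-1$ when $i$ is the head. Summing over $j\in\mathcal N_i$ and inserting into \eqref{eq:penalty} gives
\[
p_{i,k}=\tfrac{\rho}{2}(L_{\mathcal G}x_k)_i-(B\omega_k)_i .
\]
Applying \eqref{eq:omega-update} in the form $-B\omega_k=-B\omega_{k-1}+\tfrac{\rho}{2}BB^\top x_k$ turns this into \eqref{eq:p-omega}. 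With $\lambda_k=-B\omega_{k-1}$, equation \eqref{eq:p-lambda} is immediate. Multiplying \eqref{eq:omega-update} (at index $k-1\to k$) by $-B$ gives \eqref{eq:lambda-update}.

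\textbf{Step 4 (range and orthogonality claims).} By definition $\lambda_k=-B\omega_{k-1}$, so $\lambda_k\in\range(B)$. Each column of $B$ sums to zero, so $\one^\top B=0$, which gives $\one^\top\lambda_k=0$. It also gives $\one^\top L_{\mathcal G}=0$, so $\one^\top p_k=0$.
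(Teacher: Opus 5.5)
Your proposal is correct and follows the paper's proof essentially step for step. You use the same ingredients in the same order: the edge-sum invariant and the edge-difference recursion obtained from the substituted updates, and the decomposition $z_{ij,k}=\frac{\rho}{2}(x_{i,k}+x_{j,k})+B_{i,e}\omega_{e,k}$ giving $p_k=\frac{\rho}{2}L_{\mathcal G}x_k-B\omega_k$. You then shift via \eqref{eq:omega-update}, with the base case $\omega_{-1}=\omega_0=0$ and $B^\top x_0=0$, and use $\one^\top B=0$ for the zero-sum claims.
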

\begin{IEEEproof}
See Appendix~\ref{app:message-privacy}.
\end{IEEEproof}
		
Two inputs are adjacent for client $i$ if $\mathcal D_i$ is replaced by $\mathcal D_i'$ of the same prescribed size $m_i$, with all other datasets unchanged. Let $\mathcal H_K$ be the augmented transcript of released states and exchanged messages through round $K-1$, together with the public initialization and schedules. Given the released-state history, all directed messages and auxiliary states are deterministic. Thus, privacy of $\mathcal H_K$ implies privacy of any observed message transcript.
For $\delta_i\in(0,1)$, define
\begin{align}
\rho_{i,\mathrm{priv}}:=\sum_{k=0}^{K-1} \frac{2(R_k^{\mathrm{clip}})^2}{\sigma_{\mathrm{dp},k}^2},
\epsilon_i:=\rho_{i,\mathrm{priv}} +2\sqrt{\rho_{i,\mathrm{priv}}\log(1/\delta_i)}. \notag
\end{align}
Applying adaptive composition to the Gaussian releases gives the following privacy guarantee for the full interactive transcript.
		\begin{theorem}[Transcript privacy]
			\label{thm:privacy}
			Use the public, data-independent initialization and schedules in Section~\ref{subsec:algorithm-design}. Suppose $\sigma_{\mathrm{dp},k}>0$ for $k=0,\ldots,K-1$. Then $\mathcal H_K$ is $\rho_{i,\mathrm{priv}}$-zCDP and $(\epsilon_i,\delta_i)$-DP with respect to replacement of $\mathcal D_i$.
		\end{theorem}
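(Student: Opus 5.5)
The proof treats $\mathcal H_K$ as the output of an adaptive sequence of $K$ Gaussian mechanisms and applies zCDP composition. First I make precise what is released. By Lemma~\ref{lem:message}, given the public initialization $x_0$, $z_{ij,0}=\rho x_0$, the graph and the schedules, all messages $m_{i\to j,k}$ and auxiliary states $z_{ij,k+1}$ are deterministic functions of the released-state history $(x_{\cdot,1},\dots,x_{\cdot,k+1})$. Hence $\mathcal H_K$ is a deterministic post-processing of the sequence $X_{1:K}:=(x_{1},\dots,x_{K})$, with $x_k=(x_{1,k},\dots,x_{N,k})$. By the post-processing invariance of zCDP, it suffices to prove that $X_{1:K}$ is $\rho_{i,\mathrm{priv}}$-zCDP with respect to replacement of $\mathcal D_i$. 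The $(\epsilon_i,\delta_i)$ statement then follows from the standard conversion of Bun and Steinke~\cite{bun2016concentrated}: $\rho$-zCDP implies $(\rho+2\sqrt{\rho\log(1/\delta)},\delta)$-DP.

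Next I factor the joint law round by round. Write $h_k$ for the released history through round $k$. The conditional law of $x_{k+1}$ given $h_k$ factorizes over clients. Release noise is independent across clients and of all pre-release variables. Local randomization (batches, query directions, query noise, memory references) is fresh and independent across clients. For $j\neq i$, client $j$'s round-$k$ computation uses only $\mathcal D_j$, $x_{j,k}$, the frozen $p_{j,k}$ (a function of $h_k$ by \eqref{eq:p-omega}), and its own fresh randomness. Its conditional law is therefore identical under $\mathcal D_i$ and $\mathcal D_i'$.

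One subtlety must be checked: client $i$'s internal state could in principle carry memory across rounds. Here the memory is reinitialized each round by \eqref{eq:memory-initialization}, and $\phi_{i,k}^0=x_{i,k}$ is public. So, conditional on $h_k$, no hidden state links rounds, and the conditioning on public history alone is legitimate. Thus the only coordinate whose conditional law depends on $\mathcal D_i$ is
\[
x_{i,k+1}=c_{i,k}(h_k)+\bar s_{i,k}+\nu_{i,k},
\]
where $c_{i,k}=x_{i,k}-\tau_i\eta_k\mu_k\beta p_{i,k}$ is public given $h_k$, and $\bar s_{i,k}$ is a random vector with $\|\bar s_{i,k}\|\le R_k^{\mathrm{clip}}$ under either dataset.

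The main obstacle is that $\bar s_{i,k}$ is itself random: it depends on client $i$'s internal randomness, not just deterministically on $(\mathcal D_i,h_k)$. The conditional law is therefore a Gaussian mixture, not a single shifted Gaussian. I handle this by conditioning on client $i$'s internal randomness $\omega$ as well, treating $\omega$ as auxiliary coins.

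Under a common coupling of $\omega$, the two means differ by at most $\|\bar s-\bar s'\|\le 2R_k^{\mathrm{clip}}$. The Gaussian mechanism with sensitivity $\Delta=2R_k^{\mathrm{clip}}$ and variance $\sigma_{\mathrm{dp},k}^2$ is $\Delta^2/(2\sigma^2)=2(R_k^{\mathrm{clip}})^2/\sigma_{\mathrm{dp},k}^2$-zCDP. Mixing over $\omega$ preserves this bound, because Rényi divergence is jointly quasi-convex: the divergence of mixtures with a common mixing law is at most the supremum over couplings. If one prefers to avoid that argument, the alternative is to include the randomness in the analysis explicitly, i.e., the standard "randomized mechanism with independent coins" reduction.

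Finally, I apply adaptive composition of zCDP~\cite{bun2016concentrated}. Each round is $2(R_k^{\mathrm{clip}})^2/\sigma_{\mathrm{dp},k}^2$-zCDP conditional on $h_k$, uniformly over $h_k$, and the other clients' factors contribute zero divergence. The chain rule for Rényi divergence then sums the per-round parameters, giving $\rho_{i,\mathrm{priv}}$ for $X_{1:K}$, and hence for $\mathcal H_K$ by post-processing.
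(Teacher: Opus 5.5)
Your proposal is correct and follows essentially the same route as the paper: condition on the released history so that $c_{i,k}$ is fixed, bound the R\'enyi divergence between the two Gaussian mixtures by the worst pair of centers at distance at most $2R_k^{\mathrm{clip}}$ using joint quasi-convexity, and then apply adaptive zCDP composition, post-processing for the messages, and the Bun--Steinke conversion. The differences are minor: the paper couples the mixtures through the product measure $\mathsf Q_{\mathcal D_i,k}\otimes\mathsf Q_{\mathcal D_i',k}$ rather than through shared internal coins, and your explicit checks (per-round memory reinitialization, factorization of the releases across clients) spell out steps the paper states only briefly.
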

		\begin{IEEEproof}
			See Appendix~\ref{app:message-privacy}.
		\end{IEEEproof}
		
		Conditioning on $\mathcal H_k$ fixes the explicit graph correction. The clipped local increment may remain randomized, so the proof bounds the divergence between Gaussian mixtures before applying adaptive composition.
		
		\subsection{Network Stability and Disagreement}
		\label{subsec:stability-analysis}
		
		Throughout Sections~\ref{subsec:stability-analysis}--\ref{subsec:complexity-analysis}, impose Assumptions~\ref{ass:objective}--\ref{ass:oracle} and the common initialization in Section~\ref{subsec:algorithm-design}. Use synchronized constants $\tau_i=\tau\ge1$, $\eta_k=\eta>0$, and $\mu_k=\mu>0$, with $\rho,\beta>0$ fixed within each run. The clipping and noise schedules may vary with $k$.
		
		Typically, write $\bar\phi_k^t:=N^{-1}\sum_i\phi_{i,k}^t$ and $\bar x_k:=N^{-1}\sum_i x_{i,k}$. Define the disagreement measures as
		\[
		\mathcal C_k:=\frac{1}{N}\E[\|\Pi x_k\|^2],\qquad \mathcal C_{k,t}^{\phi} :=\frac{1}{N}\sum_{i=1}^N\E[\|\phi_{i,k}^t-\bar\phi_k^t\|^2].
		\]
		Then, the clipping and cumulative release energies are
		\begin{align*}
		\mathcal E_{\mathrm{cl},K}:=\sum_{k=0}^{K-1}E_k^{\mathrm{cl}},
		\mathcal E_{\mathrm{dp},K}:=\sum_{k=0}^{K-1}d\sigma_{\mathrm{dp},k}^2,
		\end{align*}
where $E_k^{\mathrm{cl}}:=\frac{1}{N}\sum_{i=1}^N \E[\|e_{i,k}^{\mathrm{cl}}\|^2]$.		
Thus, Lemma~\ref{lem:message} associates each nonzero Laplacian eigenvalue $\lambda$ with the homogeneous modal matrix
\[
M_\lambda=
\begin{bmatrix}
1-a\rho\lambda&-a\\
\rho\lambda/2&1
\end{bmatrix},
\]
where $\alpha=\eta\mu$ and $a=\tau\beta\alpha$.
		Local updates, clipping and release noise enter as additive inputs. Under the stability condition
		\begin{equation}
		0<a\rho\lambda_N<\frac{8}{3},
		\label{eq:stability}
		\end{equation}
		let $P_\lambda\succ0$ solve
		\begin{equation}
		M_\lambda^\top P_\lambda M_\lambda-P_\lambda=-I_2.
		\label{eq:lyapunov}
		\end{equation}
		Define
		\begin{align*}
		\underline p&:=\min_{2\le\ell\le N}\lambda_{\min}(P_{\lambda_\ell}),&
		\overline p&:=\max_{2\le\ell\le N}\lambda_{\max}(P_{\lambda_\ell}),
		\end{align*}
		and set $\chi:=1/(2\overline p)$ and $c_w:=\overline p(2\overline p-1)$. Let $U_\perp$ contain orthonormal eigenvectors for the nonzero Laplacian eigenvalues, and write
		\[
		\xi_k:=\operatorname{col}\!\left(
		(U_\perp^\top\!\otimes I_d)x_k,\,
		(U_\perp^\top\!\otimes I_d)\lambda_k\right).
		\]
		Let $\mathsf S$ reorder $\xi_k$ into $2d$-dimensional primal--dual pairs indexed by eigenmode. The Lyapunov matrix and energy are
		\[
		\mathcal P:=\mathsf S^\top \diag_{\ell=2}^N(P_{\lambda_\ell}\otimes I_d)\mathsf S,\qquad \mathcal V_k:=\frac{1}{N}\E[\xi_k^\top\mathcal P\xi_k].
		\]
This Lyapunov energy measures how network contraction competes with perturbations from local updates and private releases.		
		\begin{lemma}[Schur stability and input bound]
			\label{lem:stability}
			The matrices $M_\lambda$ are Schur stable for all nonzero graph modes if and only if \eqref{eq:stability} holds. Under this condition, \eqref{eq:lyapunov} has a unique positive-definite solution for each mode, and, for each round $k$,
			\begin{align}
			\mathcal V_{k+1}\le{}&(1-\chi)\mathcal V_k
			+c_w\!\Bigl[
			18\eta^2\tau^2Q^2+2E_k^{\mathrm{cl}}
			\notag\\
			&\quad
			+\left(1-\frac{1}{N}\right)d\sigma_{\mathrm{dp},k}^2
			\Bigr].
			\label{eq:V-recursion}
			\end{align}
			Furthermore,
			\begin{align}
			\mathcal C_k&\le\frac{\mathcal V_k}{\underline p},
			\label{eq:C-by-V}\\
			\frac{1}{N}\E\!\left[\|p_k\|^2\right]
			&\le
			\frac{2(\rho^2\lambda_N^2+1)}{\underline p}\mathcal V_k.
			\label{eq:p-by-V}
			\end{align}
			The common initialization gives $\mathcal V_0=0$.
		\end{lemma}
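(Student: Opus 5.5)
The proof has three parts: derive the modal recursion from Lemma~\ref{lem:message}, characterize Schur stability of $M_\lambda$ through its $2\times2$ characteristic polynomial, and combine a discrete Lyapunov inequality with Young's inequality to obtain \eqref{eq:V-recursion}.

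\emph{Modal recursion.} Stack the release rule \eqref{eq:private-release} with $\tau_i=\tau$ and $\alpha=\eta\mu$:
\[
x_{k+1}=x_k-a p_k+w_k,\qquad w_k:=s_k+e^{\mathrm{cl}}_k+\nu_k .
\]
Substituting \eqref{eq:p-lambda} gives $x_{k+1}=(I-a\rho L_{\mathcal G})x_k-a\lambda_k+w_k$. Together with \eqref{eq:lambda-update}, $\lambda_{k+1}=\lambda_k+\frac{\rho}{2}L_{\mathcal G}x_k$, this is a linear system driven by $w_k$.

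Project onto $U_\perp$. By Lemma~\ref{lem:message}, $\one^\top\lambda_k=0$, so the consensus direction decouples. Each nonzero eigenvalue $\lambda_\ell$ then yields the pair recursion $\xi^{(\ell)}_{k+1}=(M_{\lambda_\ell}\otimes I_d)\xi^{(\ell)}_k+\operatorname{col}(\tilde w^{(\ell)}_k,0)$.

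\emph{Stability.} The characteristic polynomial of $M_\lambda$ is
\[
z^2-(2-a\rho\lambda)z+\bigl(1-a\rho\lambda+a\rho\lambda/2\bigr)=z^2-(2-c)z+(1-c/2),\qquad c:=a\rho\lambda .
\]
The Jury conditions for a monic quadratic $z^2+b_1z+b_0$ require $|b_0|<1$, $1+b_1+b_0>0$ and $1-b_1+b_0>0$. Here they read:
\begin{itemize}
\item $|1-c/2|<1$, i.e.\ $0<c<4$;
\item $1-(2-c)+1-c/2=c/2>0$;
\item $1+(2-c)+1-c/2=4-3c/2>0$, i.e.\ $c<8/3$.
\end{itemize}
So $M_\lambda$ is Schur if and only if $0<c<8/3$. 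Since $c$ is increasing in $\lambda$, requiring this for all $\lambda_\ell\in[\lambda_2,\lambda_N]$ is exactly \eqref{eq:stability}. Existence and uniqueness of $P_\lambda\succ0$ solving \eqref{eq:lyapunov} then follows from the standard discrete Lyapunov theorem, with $P_\lambda=\sum_{j\ge 0}(M_\lambda^\top)^jM_\lambda^j\succeq I$.

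\emph{Lyapunov step.} For $\xi^+=M\xi+w$, Young's inequality with parameter $\gamma$ gives
\[
(\xi^+)^\top P\xi^+\le(1+\gamma)\,\xi^\top M^\top PM\xi+(1+1/\gamma)\,w^\top Pw .
\]
Using \eqref{eq:lyapunov}, $\xi^\top M^\top PM\xi=\xi^\top P\xi-\|\xi\|^2\le(1-1/\overline p)\,\xi^\top P\xi$. Choose $\gamma$ so that $(1+\gamma)(1-1/\overline p)=1-1/(2\overline p)$, i.e.\ $\gamma=1/(2\overline p-2)$. Then $1+1/\gamma=2\overline p-1$, and bounding $w^\top Pw\le\overline p\|w\|^2$ gives contraction factor $1-\chi$ and input constant $c_w=\overline p(2\overline p-1)$. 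If $\overline p=1$ the case is trivial, since $P=I$ forces $M=0$.

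\emph{Input energy.} The projected input energy is $\frac1N\E\|\Pi w_k\|^2$. The term I expect to be the main obstacle is handling the three components so that the stated constants appear:
\begin{itemize}
\item \textbf{Release noise.} Independence of $\nu_k$ from all pre-release variables means its cross terms vanish in expectation. It contributes exactly $\frac1N\E\|\Pi\nu_k\|^2=(1-\frac1N)d\sigma_{\mathrm{dp},k}^2$.
\item \textbf{Clipped update.} For the remaining part, use $\|\Pi(s+e^{\mathrm{cl}})\|^2\le\|\bar s\|^2$ and $\|\bar s\|\le\|s\|$, or alternatively $2\|s\|^2+2\|e^{\mathrm{cl}}\|^2$.
\item \textbf{Local update.} By Cauchy--Schwarz, $\E\|s_{i,k}\|^2\le\eta^2\tau\sum_t\E\|v^t_{i,k}\|^2\le9\eta^2\tau^2Q^2$ using Lemma~\ref{lem:oracle_memory}(ii). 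Together with the factor $2$ this gives $18\eta^2\tau^2Q^2$.
\end{itemize}
Care is needed because $s_k$ is not independent of $\xi_k$, but Young's inequality already absorbs that cross term.

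\emph{Remaining claims.} The bound \eqref{eq:C-by-V} follows from $\|\Pi x_k\|^2=\|U_\perp^\top x_k\|^2\le\|\xi_k\|^2\le\xi_k^\top\mathcal P\xi_k/\underline p$. For \eqref{eq:p-by-V}, write $p_k=\rho L_{\mathcal G}x_k+\lambda_k$ with both terms in $\range(U_\perp)$, and bound
\[
\|p_k\|^2\le2\rho^2\lambda_N^2\|U_\perp^\top x_k\|^2+2\|U_\perp^\top\lambda_k\|^2\le2(\rho^2\lambda_N^2+1)\|\xi_k\|^2 .
\]
Finally, $\mathcal V_0=0$ because $x_0=\one\otimes x_0$ gives $\Pi x_0=0$, and $\lambda_0=-B\omega_{-1}=0$.
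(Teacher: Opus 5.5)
Your proposal is correct and follows the paper's proof essentially step for step: the same modal recursion from Lemma~\ref{lem:message}, the second-order Jury test on $\operatorname{tr}(M_\lambda)=2-a\rho\lambda$ and $\det(M_\lambda)=1-a\rho\lambda/2$, the series solution of \eqref{eq:lyapunov}, Young's inequality with parameter $[2(\overline p-1)]^{-1}$, and the same split of $\frac{1}{N}\E[\|\Pi w_k\|^2]$ into the independent noise contribution and the $2\|s\|^2+2\|e^{\mathrm{cl}}\|^2$ bound. The differences are cosmetic. You use $|\det M_\lambda|<1$ where the paper uses $1-\det M_\lambda>0$, which is equivalent given the other two Jury inequalities. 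Your aside on $\overline p=1$ is vacuous, since $M_\lambda\neq 0$ already forces $\overline p>1$.
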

		\begin{IEEEproof}
			See Appendix~\ref{app:stability}.
		\end{IEEEproof}
		
		The modal condition governs homogeneous disagreement, while the oracle and descent bounds control the stochastic inputs. The single-client case follows by removing the network terms.
		
		\begin{remark}
The modal analysis describes coordination among $N\ge2$ clients. For $N=1$, the graph correction and dual state vanish, so $p_k=\lambda_k=0$. Both disagreement measures are identically zero because the network average equals the sole client's iterate. The algorithm then performs local derivative-free updates with clipping and Gaussian noise applied at each release. Its descent estimate follows directly from smoothness and the oracle bounds in Lemma~\ref{lem:oracle_memory}, without introducing graph eigenvalues or modal Lyapunov constants.
		\end{remark}
		
		To bound within-round drift, define
		\[
		\Gamma:=
		\frac{3+6(\tau-1)^2\alpha^2\beta^2(\rho^2\lambda_N^2+1)}
		{\underline p}.
		\]
The next estimate transfers control of the released-state disagreement to the local iterates between communication rounds.
		\begin{lemma}[Within-round disagreement]
			\label{lem:local_disagreement}
			Under \eqref{eq:stability}, for each round $k$ and $t=0,\ldots,\tau-1$,
			\begin{align}
			\mathcal C_{k,t}^{\phi}
			\le \Gamma\mathcal V_k+27\eta^2t^2Q^2.
			\label{eq:local-disagreement}
			\end{align}
		\end{lemma}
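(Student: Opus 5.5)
Unrolling the local recursion \eqref{eq:local-update} from $\phi_{i,k}^0=x_{i,k}$ with the penalty $p_{i,k}$ frozen gives
\[
\phi_{i,k}^t=x_{i,k}-t\alpha\beta p_{i,k}-\eta\sum_{s=0}^{t-1}v_{i,k}^s .
\]
Apply $\Pi$ to the stacked form. Since $\|\Pi w\|^2\le\|w\|^2$ for any stacked vector $w$, we have $\sum_i\|\phi_{i,k}^t-\bar\phi_k^t\|^2=\|\Pi\phi_k^t\|^2$. The elementary inequality $\|a+b+c\|^2\le3(\|a\|^2+\|b\|^2+\|c\|^2)$ then yields
\[
\mathcal C_{k,t}^{\phi}\le \frac{3}{N}\E\|\Pi x_k\|^2+\frac{3t^2\alpha^2\beta^2}{N}\E\|p_k\|^2+\frac{3\eta^2}{N}\E\Bigl\|\sum_{s=0}^{t-1}v_k^s\Bigr\|^2 .
\]
I use the crude unweighted split rather than a Young-type weighting, because the target constants $3$, $6$, $27$ match this split exactly.

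The first two terms are handled by Lemma~\ref{lem:stability}. By \eqref{eq:C-by-V}, the first term is at most $3\mathcal C_k\le 3\mathcal V_k/\underline p$. For the second, since $t\le\tau-1$, bound $t^2\le(\tau-1)^2$ and apply \eqref{eq:p-by-V}. This gives $3(\tau-1)^2\alpha^2\beta^2\cdot 2(\rho^2\lambda_N^2+1)\mathcal V_k/\underline p$. Adding the two contributions produces exactly $\Gamma\mathcal V_k$. For $t=0$ the extra term is absent anyway, so the bound holds trivially there as well.

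The third term is the stochastic drift. Applying Cauchy--Schwarz over the $t$ summands gives $\|\sum_{s<t}v_{i,k}^s\|^2\le t\sum_{s<t}\|v_{i,k}^s\|^2$. Lemma~\ref{lem:oracle_memory}(ii) gives $\E\|v_{i,k}^s\|^2\le 9Q^2$ for each client and step. Hence the per-client contribution is at most $9t^2Q^2$, and averaging over $i$ and multiplying by $3\eta^2$ yields $27\eta^2t^2Q^2$.

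The only delicate point is ensuring the moment bound of Lemma~\ref{lem:oracle_memory}(ii) applies unconditionally at every local step. This relies on the memory table being reinitialized each round by \eqref{eq:memory-initialization} and on Assumption~\ref{ass:oracle} holding uniformly at generated query points. No martingale cancellation is needed, since the $t^2$ scaling is already what the statement allows. The stability condition \eqref{eq:stability} enters only through the well-definedness of $\underline p$ and the bounds \eqref{eq:C-by-V}--\eqref{eq:p-by-V}.
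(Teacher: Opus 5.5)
Your proposal is correct and follows the paper's proof essentially line for line. Both unroll the frozen-penalty recursion, apply $\Pi$ with the three-term split, control the consensus and penalty terms via \eqref{eq:C-by-V}--\eqref{eq:p-by-V} with $t\le\tau-1$, and bound the drift term by Cauchy--Schwarz together with $\E\|v_{i,k}^s\|^2\le 9Q^2$. The only cosmetic difference is that the paper uses $\Pi p_k=p_k$ (from $\one^\top p_k=0$) where you use nonexpansiveness of $\Pi$, which gives the same bound.
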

		\begin{IEEEproof}
			See Appendix~\ref{app:descent-main}.
		\end{IEEEproof}
		
		\subsection{Finite-Horizon Stationarity and Consensus}
		\label{subsec:nonconvex-analysis}
		
		For the descent bound, collect the local-query and smoothing terms in
		\begin{align*}
		A_{\mathrm{loc}}
		:={}&\frac{9L_f}{2}\tau\eta^2Q^2+\tau\alpha c_r^2\mu^2\\
		&+\frac{9}{8}\alpha L_f^2\eta^2Q^2
		\bigl[2\tau(\tau-1)(2\tau-1)+1\bigr].
		\end{align*}
Applying smoothness with the oracle and disagreement bounds gives the objective decrease over one communication round.
		\begin{lemma}[One-round descent]
			\label{lem:descent}
			Under the standing assumptions and \eqref{eq:stability}, for each round $k$,
			\begingroup
			\renewcommand{\mintagsep}{3pt}
			\begin{align}
			&\E[F(\bar x_{k+1})]
			\le \E[F(\bar x_k)]
			-\frac{\alpha}{8}\sum_{t=0}^{\tau-1}
			\E[\|\nabla F(\bar\phi_k^t)\|^2]\notag\\
			&\enspace+\frac{\alpha L_f^2\tau\Gamma}{2}\mathcal V_k
			+A_{\mathrm{loc}}
			+\left(\frac{4}{\alpha}+\frac{L_f}{2}\right)E_k^{\mathrm{cl}}
			+\frac{L_f}{2N}d\sigma_{\mathrm{dp},k}^2.
			\label{eq:round-descent}
			\end{align}
			\endgroup
		\end{lemma}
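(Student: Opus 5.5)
The plan is to track the network average, on which the graph correction vanishes. By Lemma~\ref{lem:message}, $\one^\top p_k=0$, so averaging \eqref{eq:local-update} over clients gives $\bar\phi_k^{t+1}=\bar\phi_k^t-\eta\bar v_k^t$ with $\bar v_k^t:=N^{-1}\sum_i v_{i,k}^t$. Averaging \eqref{eq:private-release} likewise gives
\[
\bar x_{k+1}=\bar\phi_k^{\tau}+\bar e^{\mathrm{cl}}_k+\bar\nu_k ,
\]
where $\bar e^{\mathrm{cl}}_k$ and $\bar\nu_k$ are the client averages of $e^{\mathrm{cl}}_{i,k}$ and $\nu_{i,k}$. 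The round therefore splits into two pieces: $\tau$ inexact averaged steps from $\bar\phi_k^0=\bar x_k$ to $\bar\phi_k^\tau$, followed by one perturbation by clipping error and release noise. I will bound these two pieces separately and then add them.

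\textbf{Inner steps.} For each $t$, apply $L_f$-smoothness of $F$, which is inherited from Assumption~\ref{ass:objective}:
\[
F(\bar\phi_k^{t+1})\le F(\bar\phi_k^t)-\eta\langle\nabla F(\bar\phi_k^t),\bar v_k^t\rangle+\tfrac{L_f}{2}\eta^2\|\bar v_k^t\|^2 .
\]
Conditioning on $\mathcal F_{k,t}$ and using Lemma~\ref{lem:oracle_memory}(ii), the inner-product term becomes $-\alpha\langle\nabla F(\bar\phi_k^t),\,\nabla F(\bar\phi_k^t)+g_k^t+\bar r_k^t/\mu\rangle$. Here $g_k^t:=N^{-1}\sum_i(\nabla f_i(\phi_{i,k}^t)-\nabla f_i(\bar\phi_k^t))$ and $\bar r_k^t$ is the average of the remainders $r_{i,k}^t$. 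By Jensen and smoothness, $\|g_k^t\|^2\le L_f^2\,N^{-1}\sum_i\|\phi_{i,k}^t-\bar\phi_k^t\|^2$. Young's inequality with weight $1/4$ on each cross term then yields
\[
-\tfrac{\alpha}{2}\|\nabla F(\bar\phi_k^t)\|^2+\alpha L_f^2\,\mathcal C^{\phi}_{k,t}+\alpha c_r^2\mu^2 .
\]
The second-moment term is handled with $\E\|\bar v_k^t\|^2\le 9Q^2$, which contributes $\tfrac{9L_f}{2}\eta^2Q^2$ per step. Next, Lemma~\ref{lem:local_disagreement} replaces $\mathcal C^{\phi}_{k,t}$ by $\Gamma\mathcal V_k+27\eta^2t^2Q^2$. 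Summing over $t$ and using $\sum_{t<\tau}t^2=\tau(\tau-1)(2\tau-1)/6$ produces the $\tau\Gamma\mathcal V_k$ term and the cubic bracket in $A_{\mathrm{loc}}$. The Young weights will be tuned so that the constants come out as $\tfrac{\alpha L_f^2\tau\Gamma}{2}$ and $\tfrac{9}{8}\cdot 2\tau(\tau-1)(2\tau-1)$; only the exact bookkeeping of these weights is left open.

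\textbf{Release step.} I expand $F(\bar x_{k+1})$ around $\bar\phi_k^\tau$ by smoothness. The noise $\bar\nu_k$ is zero-mean and independent of all pre-release variables, so its linear term and its cross term with $\bar e^{\mathrm{cl}}_k$ vanish in expectation. Its quadratic term gives $\tfrac{L_f}{2}\E\|\bar\nu_k\|^2=\tfrac{L_f}{2N}d\sigma_{\mathrm{dp},k}^2$, and Jensen gives $\tfrac{L_f}{2}\E\|\bar e^{\mathrm{cl}}_k\|^2\le\tfrac{L_f}{2}E^{\mathrm{cl}}_k$.

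The main obstacle is the clipping bias $\langle\nabla F(\bar\phi_k^\tau),\bar e^{\mathrm{cl}}_k\rangle$. The clipping error depends on the whole local trajectory, so no independence argument is available, and the gradient is evaluated at $\bar\phi_k^\tau$, which lies outside the summed range $t\le\tau-1$. I will bound this term by Young as
\[
\tfrac{\alpha}{16}\|\nabla F(\bar\phi_k^\tau)\|^2+\tfrac{4}{\alpha}\|\bar e^{\mathrm{cl}}_k\|^2 .
\]
Then $\|\nabla F(\bar\phi_k^\tau)\|^2\le 2\|\nabla F(\bar\phi_k^{\tau-1})\|^2+2L_f^2\eta^2\|\bar v_k^{\tau-1}\|^2$ moves the first piece onto the last summed index, and the remainder produces the ``$+1$'' term $\tfrac{9}{8}\alpha L_f^2\eta^2Q^2$ in $A_{\mathrm{loc}}$. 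The $-\tfrac{\alpha}{2}$ coefficient leaves ample room to absorb this extra $\tfrac{\alpha}{8}\|\nabla F(\bar\phi_k^{\tau-1})\|^2$ while retaining $-\tfrac{\alpha}{8}$ on every summand. Taking total expectations and combining the inner and release bounds gives \eqref{eq:round-descent}.
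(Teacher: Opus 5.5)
Your proposal is correct and follows the paper's proof essentially step for step: averaging with $\one^\top p_k=0$, per-step smoothness combined with the memory mean from Lemma~\ref{lem:oracle_memory}(ii) and with Lemma~\ref{lem:local_disagreement}, and the same $\frac{\alpha}{16}$/$\frac{4}{\alpha}$ Young split for the clipping bias, followed by the shift from $\bar\phi_k^\tau$ to $\bar\phi_k^{\tau-1}$ that produces the ``$+1$'' in $A_{\mathrm{loc}}$. The bookkeeping you left open closes as in the paper by bounding your mismatch cross term as $\alpha|\langle\nabla F(\bar\phi_k^t),g_k^t\rangle|\le\frac{\alpha}{2}\|\nabla F(\bar\phi_k^t)\|^2+\frac{\alpha}{2}\|g_k^t\|^2$ (the paper's polarization bound), keeping weight $\frac14$ on the remainder term, which gives $-\frac{\alpha}{4}\|\nabla F(\bar\phi_k^t)\|^2+\frac{\alpha L_f^2}{2}\mathcal C^{\phi}_{k,t}+\alpha c_r^2\mu^2$ per step and hence the stated constants; there is no ``ample room'', since absorbing the release term then leaves exactly $-\frac{\alpha}{8}$ at $t=\tau-1$, and with your weight $\frac14$ on the mismatch term the $\mathcal V_k$ and cubic-bracket coefficients would be doubled.
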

		\begin{IEEEproof}
			See Appendix~\ref{app:descent-main}.
		\end{IEEEproof}
		
		Define the time-averaged stationarity--consensus criterion
		\[
		\mathcal S_K :={}\frac{1}{K\tau} \sum_{k=0}^{K-1}\sum_{t=0}^{\tau-1} \E[\|\nabla F(\bar\phi_k^t)\|^2] +\frac{1}{K}\sum_{k=0}^{K-1}\mathcal C_k,
		\]
		and the average release energies
		\[
		\overline{\mathcal E}_{\mathrm{cl},K}:=\frac{\mathcal E_{\mathrm{cl},K}}{K},
		\qquad
		\overline{\mathcal E}_{\mathrm{dp},K}:=\frac{\mathcal E_{\mathrm{dp},K}}{K}.
		\]
Summing the descent inequality and controlling the accumulated disagreement yield the following bound on $\mathcal S_K$.
		\begin{theorem}[Stationarity and consensus]
			\label{thm:main}
			Suppose Assumptions~\ref{ass:objective}--\ref{ass:oracle} hold. Use the common initialization and the synchronized parameters in Section~\ref{subsec:stability-analysis}, with a fixed round gain $a=\tau\beta\eta\mu$ satisfying \eqref{eq:stability}. Then, for each $K\ge1$,
			\begin{align}
			&\mathcal S_K=\mathcal O\!\Biggl(
			\frac{1}{K\tau\alpha}+\frac{\eta}{\mu}+\mu^2+\eta^2\tau^2
			\notag\\
			&\quad+\left(1+\frac{1}{\tau\alpha^2}\right)
			\overline{\mathcal E}_{\mathrm{cl},K}
			+\left(1+\frac{1}{N\tau\alpha}\right)
			\overline{\mathcal E}_{\mathrm{dp},K}\Biggr).
			\label{eq:main-rate}
			\end{align}
			The implicit constant depends only on the initial objective gap, $\rho$, $a$, the fixed graph, and the smoothness and oracle constants. It is independent of $K$, $\tau$, $\eta$, $\mu$, and the release energies.
		\end{theorem}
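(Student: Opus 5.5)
The proof combines the one-round descent of Lemma~\ref{lem:descent} with the Lyapunov recursion of Lemma~\ref{lem:stability}. The disagreement term $\frac{\alpha L_f^2\tau\Gamma}{2}\mathcal V_k$ on the right of \eqref{eq:round-descent} is absorbed by summing the Lyapunov recursion over rounds.

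\emph{Step 1: sum the descent inequality.} Summing \eqref{eq:round-descent} over $k=0,\ldots,K-1$ and using $F(\bar x_K)\ge F_\star$ gives
\[
\frac{\alpha}{8}\sum_{k,t}\E\|\nabla F(\bar\phi_k^t)\|^2
\le F(x_0)-F_\star+\frac{\alpha L_f^2\tau\Gamma}{2}\sum_k\mathcal V_k+KA_{\mathrm{loc}}
+\Bigl(\frac4\alpha+\frac{L_f}{2}\Bigr)\mathcal E_{\mathrm{cl},K}+\frac{L_f}{2N}\mathcal E_{\mathrm{dp},K}.
\]

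\emph{Step 2: bound the accumulated Lyapunov energy.} Unrolling \eqref{eq:V-recursion} from $\mathcal V_0=0$ gives
\[
\sum_{k=0}^{K-1}\mathcal V_k\le \frac{c_w}{\chi}\Bigl[18K\eta^2\tau^2Q^2+2\mathcal E_{\mathrm{cl},K}+\mathcal E_{\mathrm{dp},K}\Bigr].
\]
The key structural point is that $\underline p,\overline p,\chi,c_w$ depend only on $a\rho\lambda_\ell$, hence only on $a$, $\rho$ and the graph. Since $a$ is fixed, these constants are independent of $\tau,\eta,\mu$.

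\emph{Step 3: control $\Gamma$.} We have $(\tau-1)\alpha\beta\le a$, so $\Gamma\le[3+6a^2(\rho^2\lambda_N^2+1)]/\underline p=\mathcal O(1)$.

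\emph{Step 4: assemble the criterion.} Dividing Step 1 by $K\tau\alpha/8$ and inserting Step 2 gives:
\begin{itemize}
\item the initial gap contributes $\frac{1}{K\tau\alpha}$;
\item the disagreement term contributes $\mathcal O(\eta^2\tau^2+\overline{\mathcal E}_{\mathrm{cl},K}+\overline{\mathcal E}_{\mathrm{dp},K})$;
\item $A_{\mathrm{loc}}/(\tau\alpha)$ contributes $\frac{\eta^2}{\alpha}+\mu^2+\eta^2\tau^2$, and $\frac{\eta^2}{\alpha}=\frac{\eta}{\mu}$ produces the $\eta/\mu$ term;
\item the clipping term contributes $\frac{1}{\tau\alpha^2}\overline{\mathcal E}_{\mathrm{cl},K}+\frac{1}{\tau\alpha}\overline{\mathcal E}_{\mathrm{cl},K}$;
\item the release noise contributes $\frac{1}{N\tau\alpha}\overline{\mathcal E}_{\mathrm{dp},K}$.
\end{itemize}
For the consensus part of $\mathcal S_K$, \eqref{eq:C-by-V} gives $\frac1K\sum_k\mathcal C_k\le\frac{1}{K\underline p}\sum_k\mathcal V_k$. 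By Step 2 this is $\mathcal O(\eta^2\tau^2+\overline{\mathcal E}_{\mathrm{cl},K}+\overline{\mathcal E}_{\mathrm{dp},K})$, which is already within the claimed bound.

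\emph{Main obstacle: stray coefficients.} Several terms carry coefficients not visible in \eqref{eq:main-rate}.

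The term $\frac{1}{\tau\alpha}\overline{\mathcal E}_{\mathrm{cl},K}$ is not literally dominated by $1+\frac{1}{\tau\alpha^2}$ when $\alpha>1$. I plan to handle it by AM--GM, $\frac1{\tau\alpha}\le\frac12(\frac1\tau+\frac1{\tau\alpha^2})\le 1+\frac1{\tau\alpha^2}$.

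The $L_f^2\eta^2Q^2\tau^2$ part of $A_{\mathrm{loc}}$ divided by $\tau$ gives $\eta^2\tau^2$ up to constants, which is harmless. Its additive $+1$ inside the bracket yields $\eta^2$, which is absorbed since $\eta^2\le\eta^2\tau^2$.

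The smoothing term gives $c_r^2\mu^2$, as required.

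The disagreement coefficient $\frac{\alpha L_f^2\tau\Gamma}{2}$ divided by $\tau\alpha$ gives $\frac{L_f^2\Gamma}{2}=\mathcal O(1)$, which is exactly why a fixed round gain is needed.

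The check I must carry out carefully is that no hidden factor such as $\beta$ or $\eta/\alpha$ survives, so that the constant depends only on the gap, $\rho$, $a$, the graph, $L_f$, $c_r$ and $Q$. This holds because $\beta$ appears only through $a$.
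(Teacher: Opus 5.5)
Your proposal is correct and follows essentially the same route as the paper's proof. Both sum the one-round descent, absorb $\sum_k\mathcal V_k$ through the summed Lyapunov recursion with $\mathcal V_0=0$, and bound $\Gamma$ and the Lyapunov constants uniformly using the fixed gain $a$. Both then dominate the stray $\frac{1}{\tau\alpha}\overline{\mathcal E}_{\mathrm{cl},K}$ term by $1+\frac{1}{\tau\alpha^2}$; your AM--GM step is equivalent to the paper's direct inequality.
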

		\begin{IEEEproof}
			See Appendix~\ref{app:descent-main}.
		\end{IEEEproof}
		
		The first four terms in \eqref{eq:main-rate} account for derivative-free descent and local drift; the remaining terms quantify clipping and Gaussian release. These residual terms do not establish an unavoidable error floor. The bound is time-averaged, not a last-iterate guarantee; the internal iterates used in $\mathcal S_K$ are not additional releases.
		
		For the non-private case, set $e_{i,k}^{\mathrm{cl}}=0$ and $\sigma_{\mathrm{dp},k}=0$, and write $T:=K\tau$. For fixed $\eta_0,\mu_0>0$ and $0<a_0<8/(3\rho\lambda_N)$, choose
		\begin{equation}
		\eta_T=\eta_0T^{-1/2},
		\quad
		\mu_T=\mu_0T^{-1/6},
		\quad
		\beta_{T,\tau}
		=
		\frac{a_0}{\tau\eta_T\mu_T}.
		\label{eq:nonprivate-schedule}
		\end{equation}
Substituting this schedule into Theorem~\ref{thm:main} quantifies how the local stage length affects the rate.
		\begin{corollary}[Non-private local-update rate]
			\label{cor:nonprivate}
			Under the conditions of Theorem~\ref{thm:main}, use \eqref{eq:nonprivate-schedule} with zero clipping error and release noise. If the oracle moment bound is uniform over this family of runs, then
			\begin{equation}
			\mathcal S_K
			=
			\mathcal O\!\left(
			T^{-1/3}
			+\frac{\tau^2}{T}
			\right).
			\label{eq:nonprivate-rate}
			\end{equation}
			In particular, $\tau=\mathcal O(T^{1/3})$ preserves the $\mathcal O(T^{-1/3})$ order, and $\tau=\Theta(T^{1/3})$ gives $K=\Theta(T^{2/3})$.
		\end{corollary}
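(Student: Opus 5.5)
The corollary follows by substituting the schedule \eqref{eq:nonprivate-schedule} into the bound \eqref{eq:main-rate} of Theorem~\ref{thm:main} with $\overline{\mathcal E}_{\mathrm{cl},K}=\overline{\mathcal E}_{\mathrm{dp},K}=0$. First I check that the theorem applies uniformly over the family of runs. The choice $\beta_{T,\tau}=a_0/(\tau\eta_T\mu_T)$ gives $a=\tau\beta\eta\mu=a_0$ for every $T$ and $\tau$. Since $a_0<8/(3\rho\lambda_N)$, condition \eqref{eq:stability} holds with the same $a$ and $\rho$ throughout. Hence the modal matrices $M_\lambda$, the Lyapunov solutions $P_\lambda$, and the constants $\underline p,\overline p,\chi,c_w$ in Lemma~\ref{lem:stability} are the same across runs. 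Theorem~\ref{thm:main} states that its implicit constant depends only on the initial gap, $\rho$, $a$, the graph, and the smoothness and oracle constants. The uniform query-moment hypothesis keeps $Q^2$ fixed, so one constant $C$ serves all $(T,\tau)$.

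Next I evaluate each surviving term with $\alpha=\eta_T\mu_T=\eta_0\mu_0T^{-2/3}$ and $K\tau=T$:
\begin{align*}
\frac{1}{K\tau\alpha}&=\frac{T^{2/3}}{\eta_0\mu_0T}=\mathcal O(T^{-1/3}),&
\frac{\eta_T}{\mu_T}&=\frac{\eta_0}{\mu_0}T^{-1/2+1/6}=\mathcal O(T^{-1/3}),\\
\mu_T^2&=\mu_0^2T^{-1/3},&
\eta_T^2\tau^2&=\eta_0^2\frac{\tau^2}{T}.
\end{align*}
Adding these gives \eqref{eq:nonprivate-rate}. For the consequences, $\tau=\mathcal O(T^{1/3})$ makes $\tau^2/T=\mathcal O(T^{-1/3})$. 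Taking $\tau=\Theta(T^{1/3})$ gives $K=T/\tau=\Theta(T^{2/3})$.

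The main obstacle is justifying that no hidden $\tau$- or $\beta$-dependence enters through $\Gamma$. Note that $\beta_{T,\tau}$ grows like $T^{2/3}/\tau$, so it is unbounded. However, $\Gamma$ contains only $(\tau-1)^2\alpha^2\beta^2\le a_0^2$, which is bounded. The term $\alpha L_f^2\tau\Gamma\mathcal V_k/2$ in Lemma~\ref{lem:descent} therefore carries a bounded factor $\Gamma$. The accumulated $\mathcal V_k$ is controlled by \eqref{eq:V-recursion} through the $18\eta^2\tau^2Q^2$ input. This yields $\frac1K\sum_k\mathcal V_k\le c_w\,18\eta^2\tau^2Q^2/\chi$, which gives both the consensus contribution $\eta^2\tau^2$ and a descent contribution of the same order after division by $\tau\alpha$.

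I would verify this absorption explicitly, since the theorem's claim of independence from $\beta$ hinges on it. I would also check that the $A_{\mathrm{loc}}/(\tau\alpha)$ terms reduce to $\eta/\mu$, $\mu^2$ and $\eta^2\tau^2$. Concretely, $\tau\eta^2/(\tau\alpha)=\eta/\mu$, and $\alpha\eta^2\tau^3/(\tau\alpha)=\eta^2\tau^2$. Once these hold, the corollary follows from the term-by-term substitution above.
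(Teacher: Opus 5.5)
Your proposal is correct and is essentially the paper's proof. The schedule pins the round gain at $a=a_0$, the release terms vanish, and substituting $\alpha=\eta_0\mu_0T^{-2/3}$ into \eqref{eq:main-rate} gives $\mathcal O(T^{-1/3})$ for $1/(T\alpha)$, $\eta_T/\mu_T$ and $\mu_T^2$, and $\mathcal O(\tau^2/T)$ for $\eta_T^2\tau^2$. Your extra check that $\Gamma\le(3+6a_0^2(\rho^2\lambda_N^2+1))/\underline p$ and that the Lyapunov constants stay uniform in $(T,\tau)$ is the same uniformity argument the paper gives in the proof of Theorem~\ref{thm:main} and in Section~\ref{subsec:complexity-analysis}.
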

		\begin{IEEEproof}
			See Appendix~\ref{app:descent-main}.
		\end{IEEEproof}
		
		Under this schedule, the round gain remains fixed while the individual parameters vary. The following remark addresses convex objectives.
		\begin{remark}
For convex objectives, Theorem~\ref{thm:main} still bounds stationarity and disagreement under the stated assumptions. If a minimizer $x^\star$ exists, convexity gives $F(x)-F(x^\star)\le\langle\nabla F(x),x-x^\star\rangle$, so exact stationarity implies global optimality. Turning this relation into an objective-gap rate requires additional control of the distance to the solution set. In particular, Corollary~\ref{cor:nonprivate} preserves the stationarity and consensus rate with the same reduction in communication rounds for convex instances.
		\end{remark}
		
		\subsection{Communication and Query Complexity}
		\label{subsec:complexity-analysis}
		
		Under the non-private schedule of Corollary~\ref{cor:nonprivate}, choosing $\tau=\Theta(T^{1/3})$ preserves the $\mathcal O(T^{-1/3})$ stationarity--consensus rate. For a tolerance $\varepsilon_{\mathrm{stat}}>0$ on $\mathcal S_K$, sufficient budgets are $T=\mathcal O(\varepsilon_{\mathrm{stat}}^{-3})$ local updates and $K=\mathcal O(\varepsilon_{\mathrm{stat}}^{-2})$ communication rounds. The latter improves on the sufficient $\mathcal O(\varepsilon_{\mathrm{stat}}^{-3})$ round bound for the $\tau=1$ specialization. Client $i$ uses $m_i+\tau b_i$ function queries per round, including memory initialization, giving $\mathcal Q_i=Km_i+Tb_i=\mathcal O(m_i\varepsilon_{\mathrm{stat}}^{-2}+b_i\varepsilon_{\mathrm{stat}}^{-3})$. Each directed edge carries one $d$-dimensional vector per round, so the total scalar communication cost is $2|\mathcal E|dK=\mathcal O(|\mathcal E|d\varepsilon_{\mathrm{stat}}^{-2})$. The memory table occupies $\mathcal O(m_id)$ storage in addition to the model and edge states. For private training with a constant clipping radius and uniform release-noise variance, Theorem~\ref{thm:privacy} gives $\sigma_{\mathrm{dp}}^2\propto K$ at a fixed total zCDP budget. At fixed $T$, increasing $\tau$ reduces both the number of releases $K=T/\tau$ and the noise variance required per release. Theorem~\ref{thm:main} quantifies the accompanying local-drift and release-error terms, relating these communication savings to the stationarity--consensus bound.
		
		The constants in this comparison remain controlled as the local-stage length grows. For fixed $\rho$ and graph, \eqref{eq:nonprivate-schedule} maintains $a=a_0$, so the modal matrices $M_\lambda$ and their Lyapunov solutions $P_\lambda$ are unchanged. Since $(\tau-1)^2\alpha^2\beta^2\le a_0^2$, the coefficient $\Gamma$ is uniformly bounded in $T$ and $\tau$. The uniform oracle-moment condition in Corollary~\ref{cor:nonprivate} also makes the bound $9Q^2$ in Lemma~\ref{lem:oracle_memory}(ii) independent of $T$ and $\tau$. Applying \eqref{eq:nonprivate-schedule} to Theorem~\ref{thm:main} gives \eqref{eq:nonprivate-rate} with uniform constants and the explicit local-drift term $\tau^2/T$. The privacy calculation uses the same round-level description. Each client forms its outgoing messages by deterministic post-processing of one protected state and the preceding transcript, as established in Theorem~\ref{thm:privacy}. Under replacement of one client's dataset, privacy composes over that client's $K$ releases, whereas the communication count includes all $2|\mathcal E|K$ directed messages. The other clients' conditional response laws are unchanged under this replacement, so adaptive composition covers the full interactive transcript.
	\end{spadecompact}

\section{Experimental Evaluation}
\label{sec:experiments}

\begingroup

\begingroup
We evaluate SPADE-DFL on four classification tasks under heterogeneous client-data partitions. We also examine the effects of client count and privacy budget, and compare communication costs under matched local-update budgets on a bounded nonconvex problem.

\subsection{Datasets and Models}

\begin{figure}[t]
  \centering
  \setlength{\tabcolsep}{0.1pt}

  \newcommand{\dsimg}[1]{%
    \includegraphics[
      width=0.098\columnwidth,
      height=0.12\columnwidth
    ]{#1}%
  }

  \begin{tabular}{@{}*{10}{c}@{}}

  \dsimg{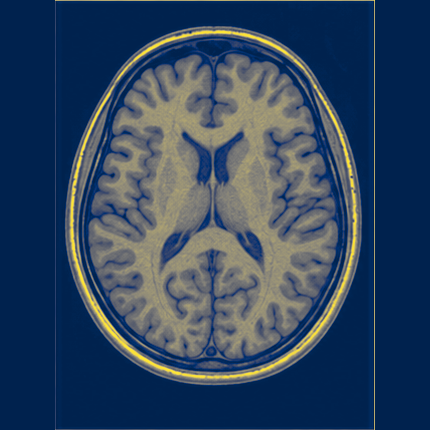}
  & \dsimg{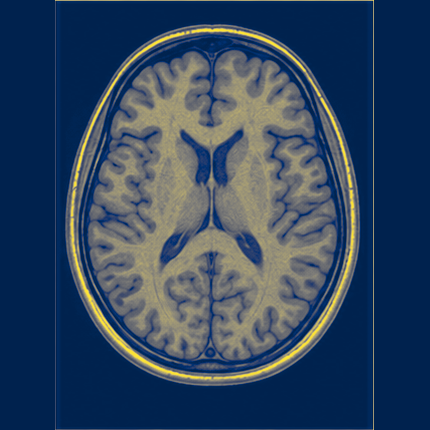}
  & \dsimg{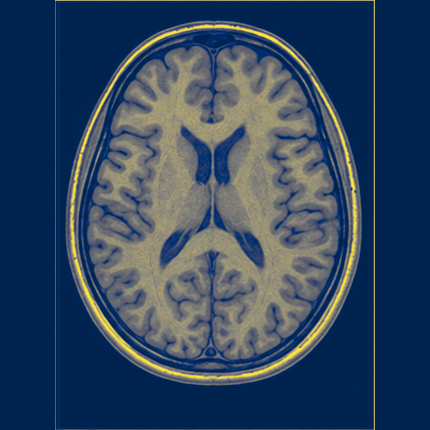}
  & \dsimg{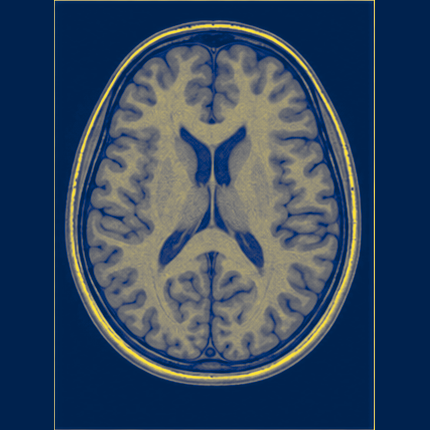}
  & \dsimg{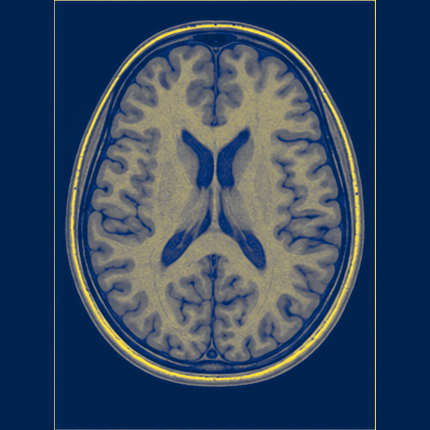}
  & \dsimg{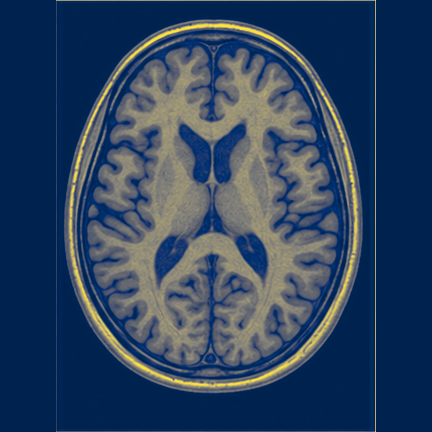}
  & \dsimg{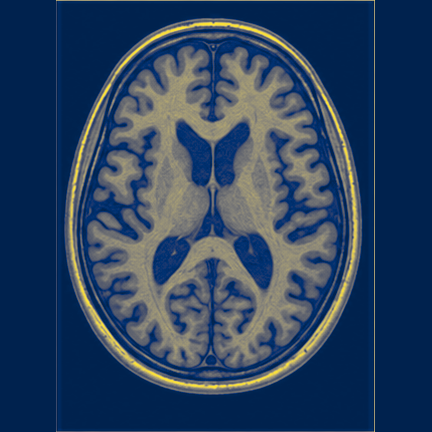}
  & \dsimg{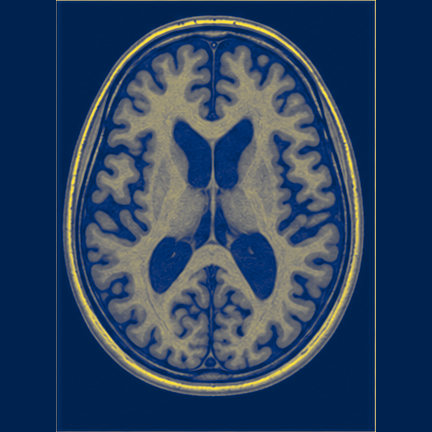}
  & \dsimg{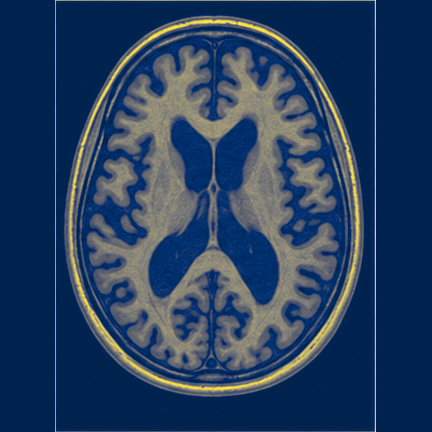}
  & \dsimg{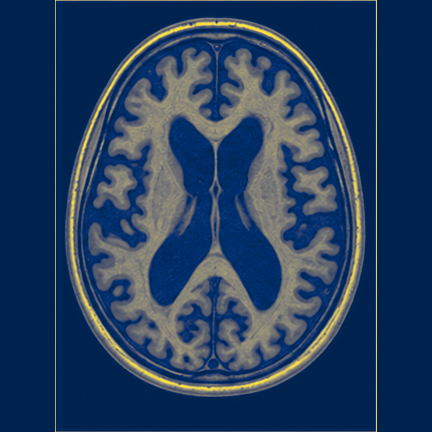}
  \\[-0.3pt]
  \specialrule{0.2pt}{-1pt}{1pt}

  \dsimg{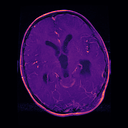}
  & \dsimg{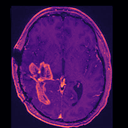}
  & \dsimg{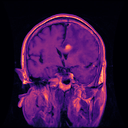}
  & \dsimg{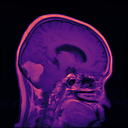}
  & \dsimg{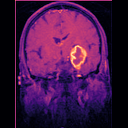}
  & \dsimg{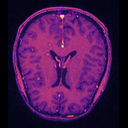}
  & \dsimg{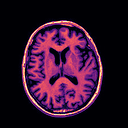}
  & \dsimg{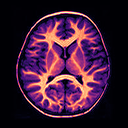}
  & \dsimg{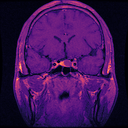}
  & \dsimg{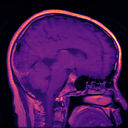}
  \\[-0.3pt]
  \specialrule{0.2pt}{-2pt}{0pt}

  \end{tabular}

  \vspace{-0.5mm}
  \caption{Illustrative brain images related to Alzheimer’s disease and brain tumors.}
  \label{fig:dataset-samples}
\end{figure}

We use four datasets, i.e., MNIST\datasetnote{https://www.kaggle.com/datasets/hojjatk/mnist-dataset}, Fashion-MNIST\datasetnote{https://www.kaggle.com/datasets/zalando-research/fashionmnist}, Alzheimer’s disease\datasetnote{https://www.kaggle.com/datasets/rabieelkharoua/alzheimers-disease-dataset}, and Brain Tumor MRI\datasetnote{https://www.kaggle.com/datasets/masoudnickparvar/brain-tumor-mri-dataset}.
The MNIST task distinguishes digits 6 and 7, and the Fashion-MNIST task distinguishes T-shirt/top from Trouser. The Alzheimer’s disease task uses structured clinical features for binary classification. Brain Tumor MRI is used for four-class image classification.

\begin{figure*}[t]
\color{black}
	\centering
        \begin{minipage}[t]{0.24\textwidth}
		\centering
		\includegraphics[width=\linewidth]{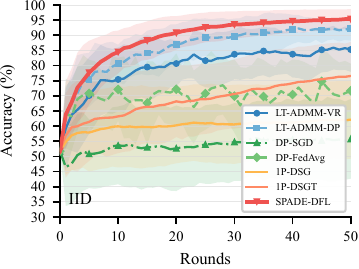}
	\end{minipage}\hfill
        \begin{minipage}[t]{0.24\textwidth}
		\centering
		\includegraphics[width=\linewidth]{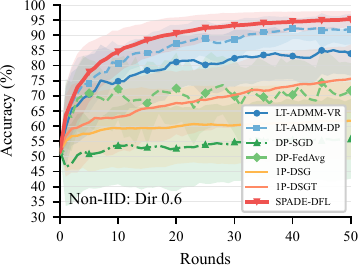}
	\end{minipage}\hfill
        \begin{minipage}[t]{0.24\textwidth}
		\centering
		\includegraphics[width=\linewidth]{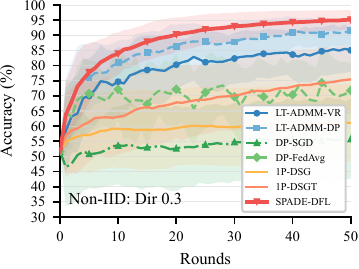}
	\end{minipage}\hfill
        \begin{minipage}[t]{0.24\textwidth}
		\centering
		\includegraphics[width=\linewidth]{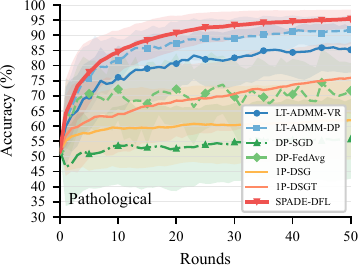}
        \end{minipage}
	\caption{Test accuracy on binary MNIST.}
	\label{fig:MNIST}
	\vspace{0.2cm}
	
        \begin{minipage}[t]{0.24\textwidth}
		\centering
		\includegraphics[width=\linewidth]{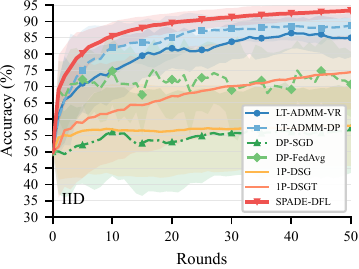}
	\end{minipage}\hfill
        \begin{minipage}[t]{0.24\textwidth}
		\centering
		\includegraphics[width=\linewidth]{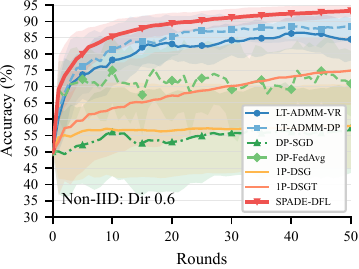}
	\end{minipage}\hfill
        \begin{minipage}[t]{0.24\textwidth}
		\centering
		\includegraphics[width=\linewidth]{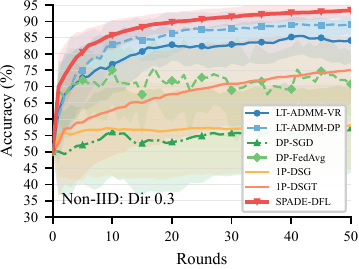}
	\end{minipage}\hfill
        \begin{minipage}[t]{0.24\textwidth}
		\centering
		\includegraphics[width=\linewidth]{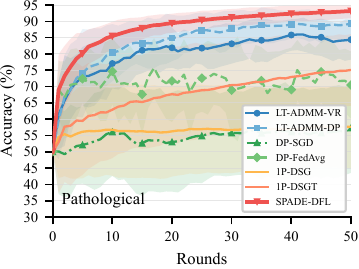}
        \end{minipage}
	\caption{Test accuracy on binary Fashion-MNIST.}
	\label{fig:Fashion MNIST}
	\vspace{0.2cm}
	
        \begin{minipage}[t]{0.24\textwidth}
		\centering
		\includegraphics[width=\linewidth]{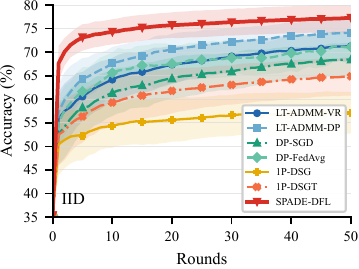}
	\end{minipage}\hfill
        \begin{minipage}[t]{0.24\textwidth}
		\centering
		\includegraphics[width=\linewidth]{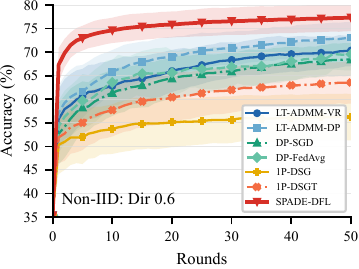}
	\end{minipage}\hfill
        \begin{minipage}[t]{0.24\textwidth}
		\centering
		\includegraphics[width=\linewidth]{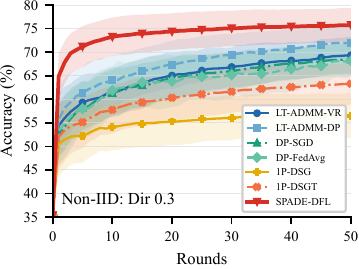}
	\end{minipage}\hfill
        \begin{minipage}[t]{0.24\textwidth}
		\centering
		\includegraphics[width=\linewidth]{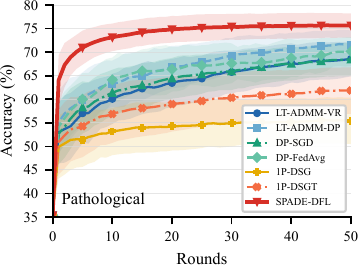}
        \end{minipage}
	\caption{Test accuracy on Alzheimer’s disease.}
    \label{fig:Alzheimer}
	\vspace{0.2cm}
	
        \begin{minipage}[t]{0.24\textwidth}
		\centering
		\includegraphics[width=\linewidth]{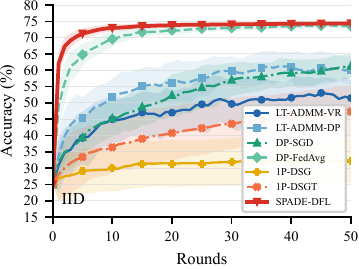}
	\end{minipage}\hfill
        \begin{minipage}[t]{0.24\textwidth}
		\centering
		\includegraphics[width=\linewidth]{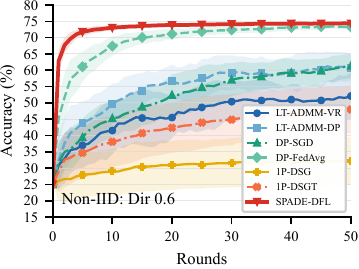}
	\end{minipage}\hfill
        \begin{minipage}[t]{0.24\textwidth}
		\centering
		\includegraphics[width=\linewidth]{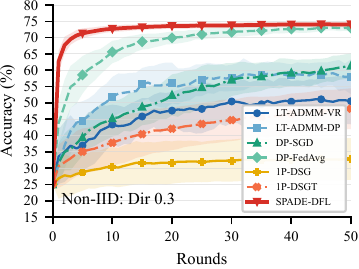}
	\end{minipage}\hfill
        \begin{minipage}[t]{0.24\textwidth}
		\centering
		\includegraphics[width=\linewidth]{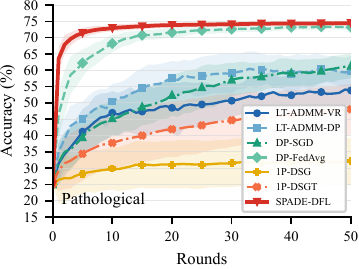}
        \end{minipage}
	\caption{Test accuracy on Brain Tumor MRI.}
	\label{fig:Brain Tumor}
\end{figure*}

For MNIST and Fashion-MNIST, the original 784-dimensional image vectors are reduced to 10 principal components before training. The Alzheimer’s disease task uses 39-dimensional structured clinical features. For Brain Tumor MRI, a frozen ResNet-18 V1 network extracts 512-dimensional representations. Principal component analysis fitted on the training split reduces these representations to 10 dimensions, followed by a four-class linear softmax classifier with 40 trainable parameters.

Four client-data distributions are considered: independent and identically distributed (IID), Dirichlet partitions with concentration parameters $\alpha=0.6$ and $\alpha=0.3$, and a pathological partition. A smaller Dirichlet concentration produces stronger variation in class proportions across clients. The same data partition and random seed are used across methods within each comparison.

\subsection{Comparative Evaluation under Heterogeneous Client Data}\label{VB}
We compare SPADE-DFL with LT-ADMM-VR, LT-ADMM-DP, DP-SGD, DP-FedAvg, 1P-DSG, and 1P-DSGT. The LT-ADMM-VR, LT-ADMM-DP, DP-SGD, and DP-FedAvg implementations use symmetric two-point component-loss estimates, while 1P-DSG, 1P-DSGT, and SPADE-DFL use one-point estimates. Function-query budgets and privacy settings vary across methods.

The comparison uses 50 rounds and 30 paired random seeds. The decentralized implementations use \(N=31\) clients on an undirected Erdős–Rényi graph with edge probability 0.3. The centralized DP-SGD baseline uses pooled data with \(N=1\), so its results are shared across the client-data partitions. SPADE-DFL uses $\tau=2$ local updates for MNIST, Fashion-MNIST, and Brain Tumor MRI, and \(\tau=8\) for Alzheimer’s disease.
For the comparative experiments, SPADE-DFL reinitializes the estimator memory at the beginning of each round
using~(4) and is evaluated without release noise.

For MNIST, Fashion-MNIST, and Brain Tumor MRI, the curves and final-round values are summarized over 30 random seeds using the arithmetic mean and population standard deviation. For Alzheimer’s disease, the same statistics are calculated over 150 fold--seed trajectories obtained from five stratified folds and 30 seeds per fold. All accuracy values are reported as percentages.
Figs.~\ref{fig:MNIST}--\ref{fig:Brain Tumor} present the accuracy trajectories over 50 communication rounds. In each figure, the panels from left to right correspond to IID, Dirichlet $\alpha=0.6$, Dirichlet $\alpha=0.3$, and pathological partitions. The curves show the arithmetic mean, and the shaded regions show $\pm$ one population standard deviation. Table~\ref{tab:distribution_final_accuracy} reports the corresponding final-round accuracies.
Boldface marks the largest mean in each row, underlining marks the largest competing mean, and $\Delta_{\mathrm{best}}$ denotes their difference in percentage points.
\endgroup

\begin{table*}[htbp!]
\centering
\caption{Test accuracy at round 50 under the configurations in Section \ref{VB}.}
\label{tab:distribution_final_accuracy}
\scriptsize
\setlength{\tabcolsep}{2.2pt}
\renewcommand{\arraystretch}{0.90}
\begin{tabular*}{\textwidth}{@{\extracolsep{\fill}}llcccccccc@{}}
\toprule
Dataset & Distribution
& LT-ADMM-VR
& LT-ADMM-DP
& DP-SGD
& DP-FedAvg
& 1P-DSG
& 1P-DSGT
& SPADE-DFL
& $\Delta_{\mathrm{best}}$ \\
\midrule
\multirow{4}{*}{MNIST}
& IID
& $85.30{\pm}7.35$
& $\underline{92.22{\pm}3.97}$
& $55.81{\pm}13.16$
& $71.69{\pm}8.96$
& $62.15{\pm}12.91$
& $76.58{\pm}11.20$
& $\mathbf{95.40{\pm}3.24}$
& $+3.18$ \\
& Dir. ($\alpha=0.6$)
& $83.95{\pm}7.29$
& $\underline{92.04{\pm}5.01}$
& $55.81{\pm}13.16$
& $71.68{\pm}9.11$
& $61.81{\pm}12.81$
& $75.68{\pm}11.33$
& $\mathbf{95.31{\pm}2.68}$
& $+3.27$ \\
& Dir. ($\alpha=0.3$)
& $85.07{\pm}6.97$
& $\underline{91.55{\pm}4.75}$
& $55.81{\pm}13.16$
& $71.75{\pm}9.15$
& $61.15{\pm}12.78$
& $75.45{\pm}11.17$
& $\mathbf{95.15{\pm}3.13}$
& $+3.60$ \\
& Pathological
& $85.21{\pm}7.48$
& $\underline{92.05{\pm}4.85}$
& $55.81{\pm}13.16$
& $71.69{\pm}9.11$
& $61.98{\pm}12.97$
& $75.97{\pm}11.81$
& $\mathbf{95.37{\pm}3.08}$
& $+3.32$ \\
\midrule
\multirow{4}{*}{Fashion-MNIST}
& IID
& $84.95{\pm}5.94$
& $\underline{88.74{\pm}4.03}$
& $57.33{\pm}13.79$
& $70.67{\pm}9.98$
& $58.01{\pm}12.37$
& $74.43{\pm}11.36$
& $\mathbf{93.28{\pm}2.21}$
& $+4.54$ \\
& Dir. ($\alpha=0.6$)
& $84.48{\pm}6.51$
& $\underline{88.42{\pm}4.68}$
& $57.33{\pm}13.79$
& $70.89{\pm}10.04$
& $57.90{\pm}13.03$
& $74.92{\pm}10.46$
& $\mathbf{93.25{\pm}1.87}$
& $+4.83$ \\
& Dir. ($\alpha=0.3$)
& $84.12{\pm}6.86$
& $\underline{89.13{\pm}3.22}$
& $57.33{\pm}13.79$
& $70.77{\pm}10.18$
& $58.03{\pm}12.94$
& $75.09{\pm}10.63$
& $\mathbf{93.36{\pm}2.18}$
& $+4.23$ \\
& Pathological
& $84.39{\pm}6.13$
& $\underline{89.32{\pm}4.29}$
& $57.33{\pm}13.79$
& $70.49{\pm}10.30$
& $57.69{\pm}12.61$
& $75.11{\pm}10.99$
& $\mathbf{93.17{\pm}2.22}$
& $+3.85$ \\
\midrule
\multirow{4}{*}{Alzheimer’s disease}
& IID
& $71.28{\pm}3.18$
& $\underline{73.99{\pm}2.64}$
& $68.47{\pm}3.44$
& $71.10{\pm}3.10$
& $57.11{\pm}4.41$
& $64.86{\pm}4.08$
& $\mathbf{77.27{\pm}2.91}$
& $+3.28$ \\
& Dir. ($\alpha=0.6$)
& $70.34{\pm}3.32$
& $\underline{73.10{\pm}3.24}$
& $68.47{\pm}3.44$
& $69.56{\pm}3.43$
& $56.20{\pm}4.95$
& $63.56{\pm}4.49$
& $\mathbf{77.31{\pm}2.86}$
& $+4.21$ \\
& Dir. ($\alpha=0.3$)
& $69.33{\pm}3.93$
& $\underline{72.02{\pm}3.60}$
& $68.47{\pm}3.44$
& $68.18{\pm}3.51$
& $56.49{\pm}4.75$
& $63.27{\pm}4.54$
& $\mathbf{75.73{\pm}3.62}$
& $+3.71$ \\
& Pathological
& $68.56{\pm}3.72$
& $\underline{71.61{\pm}3.02}$
& $68.47{\pm}3.44$
& $70.17{\pm}3.05$
& $55.42{\pm}4.93$
& $61.84{\pm}4.95$
& $\mathbf{75.64{\pm}2.58}$
& $+4.03$ \\
\midrule
\multirow{4}{*}{Brain Tumor MRI}
& IID
& $51.46{\pm}5.42$
& $60.06{\pm}4.47$
& $61.40{\pm}3.79$
& $\underline{73.52{\pm}0.75}$
& $32.28{\pm}6.81$
& $47.35{\pm}7.86$
& $\mathbf{74.38{\pm}0.52}$
& $+0.86$ \\
& Dir. ($\alpha=0.6$)
& $52.09{\pm}7.11$
& $60.81{\pm}4.06$
& $61.40{\pm}3.79$
& $\underline{73.28{\pm}0.79}$
& $32.23{\pm}7.24$
& $47.95{\pm}7.35$
& $\mathbf{74.35{\pm}0.55}$
& $+1.07$ \\
& Dir. ($\alpha=0.3$)
& $50.50{\pm}7.08$
& $58.01{\pm}5.35$
& $61.40{\pm}3.79$
& $\underline{72.96{\pm}0.92}$
& $32.81{\pm}6.40$
& $48.22{\pm}6.26$
& $\mathbf{74.07{\pm}0.80}$
& $+1.11$ \\
& Pathological
& $53.81{\pm}6.08$
& $59.61{\pm}4.86$
& $61.40{\pm}3.79$
& $\underline{73.20{\pm}0.87}$
& $32.03{\pm}7.24$
& $48.06{\pm}7.70$
& $\mathbf{74.44{\pm}0.63}$
& $+1.24$ \\
\bottomrule
\end{tabular*}
\end{table*}

Under the stated oracle and privacy configurations, SPADE-DFL achieved the highest mean test accuracy at round 50 in all 16 dataset--partition combinations reported in Table \ref{tab:distribution_final_accuracy}.
The gains over the best competing method ranged from 0.86 percentage points on IID Brain Tumor MRI to 4.83 percentage points on Fashion-MNIST with Dirichlet $\alpha=0.6$. Across the four partitions, SPADE-DFL achieved mean accuracies of 95.15–95.40\% on MNIST, 93.17–93.36\% on Fashion-MNIST, 75.64–77.31\% on Alzheimer’s disease, and 74.07–74.44\% on Brain Tumor MRI.

\subsection{Sensitivity to Client Count and Privacy Budget}\label{Sensitivity to Client Count and Privacy Budget}

\begin{table}[htbp!]
\centering

\caption{Validation accuracy under varying client counts, $\varepsilon=32$.}
\label{tab:client_count_accuracy}

\scriptsize
\setlength{\tabcolsep}{1.5pt}
\setlength{\medmuskip}{0mu}
\renewcommand{\arraystretch}{0.95}

\resizebox{\columnwidth}{!}{%
\begin{tabular}{@{}lccccc@{}}
\toprule
Dataset & $N=30$ & $N=40$ & $N=50$ & $N=75$ & $N=100$ \\
\midrule
MNIST
& $99.21\pm0.15$ & $99.03\pm0.15$ & $99.21\pm0.19$
& $99.21\pm0.23$ & $99.15\pm0.11$ \\
Fashion-MNIST
& $94.22\pm0.67$ & $92.28\pm0.98$ & $92.23\pm1.03$
& $93.00\pm0.90$ & $93.55\pm0.70$ \\
Alzheimer’s disease
& $72.16\pm0.79$ & $72.16\pm1.43$ & $72.84\pm0.89$
& $72.49\pm1.06$ & $73.00\pm0.48$ \\
Brain Tumor MRI
& $65.46\pm3.72$ & $68.82\pm3.56$ & $65.00\pm2.34$
& $70.46\pm3.02$ & $70.96\pm3.40$ \\
\bottomrule
\end{tabular}%
}
\end{table}

\begin{table}[htbp!]
\centering
\caption{Validation accuracy under varying privacy budgets, $N=100$.}
\label{tab:privacy_budget_accuracy}

\scriptsize
\setlength{\tabcolsep}{1.5pt}
\setlength{\medmuskip}{0mu}
\renewcommand{\arraystretch}{0.95}

\resizebox{\columnwidth}{!}{%
\begin{tabular}{@{}lccccc@{}}
\toprule
Dataset
& $\varepsilon=4$ & $\varepsilon=8$ & $\varepsilon=16$
& $\varepsilon=24$ & $\varepsilon=32$ \\
\midrule
MNIST
& $93.32\pm8.71$ & $97.98\pm1.79$ & $99.05\pm0.12$
& $99.15\pm0.16$ & $99.15\pm0.11$ \\
Fashion-MNIST
& $92.50\pm2.18$ & $93.38\pm1.81$ & $93.43\pm1.01$
& $93.58\pm0.88$ & $93.55\pm0.70$ \\
Alzheimer’s disease
& $61.05\pm3.17$ & $67.26\pm2.55$ & $71.56\pm0.74$
& $72.81\pm0.76$ & $73.00\pm0.48$ \\
Brain Tumor MRI
& $43.29\pm9.92$ & $54.00\pm6.73$ & $64.18\pm5.50$
& $69.25\pm4.60$ & $70.96\pm3.40$ \\
\bottomrule
\end{tabular}%
}

\end{table}


Tables~\ref{tab:client_count_accuracy} and~\ref{tab:privacy_budget_accuracy} report validation accuracy for SPADE-DFL under Dirichlet partitions with $\alpha=0.3$; MNIST, Fashion-MNIST, and Alzheimer's disease use balanced-capacity partitions, whereas Brain Tumor MRI uses the ordinary Dirichlet partition recorded in the adopted runs.
The reported privacy budgets use client-level replacement adjacency with \(\delta=10^{-5}\), where adjacent inputs replace one client's entire fixed-size dataset while leaving all other clients' fixed preprocessed datasets unchanged; the accountant composes one Gaussian release per client per round with sensitivity \(2R_k^{\mathrm{clip}}\), and the guarantee is conditional on the frozen preprocessing and partition rather than end-to-end from raw data.

Table~\ref{tab:client_count_accuracy} varies the number of clients from 30 to 100 at $\varepsilon=32$. MNIST accuracy remained between 99.03\% and 99.21\%. The corresponding ranges were 92.23–94.22\% for Fashion-MNIST, 72.16–73.00\% for Alzheimer’s disease, and 65.00–70.96\% for Brain Tumor MRI. Brain Tumor MRI showed the largest variation and reached its highest mean accuracy at $N=100$.

Table~\ref{tab:privacy_budget_accuracy} varies $\varepsilon$ from 4 to 32 at $N=100$. Mean validation accuracy increased from 93.32\% to 99.15\% on MNIST, from 92.50\% to 93.55\% on Fashion-MNIST, from 61.05\% to 73.00\% on Alzheimer’s disease, and from 43.29\% to 70.96\% on Brain Tumor MRI. The largest gain occurred on Brain Tumor MRI. MNIST accuracy changed little beyond $\varepsilon=16$, while Fashion-MNIST varied by approximately one percentage point over the tested range.

\subsection{Client Heterogeneity and Roundwise Comparisons}

\begin{figure}[!t]
\color{black}
	\centering
        \includegraphics[width=\columnwidth]{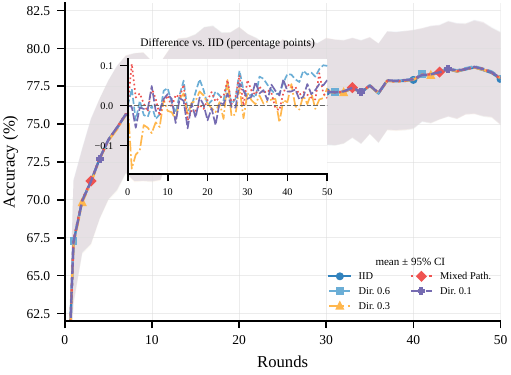}
	\caption{Test accuracy across five MNIST partitions.}
	\label{distribution-accuracy}
\end{figure}

\begin{figure}[htbp!]
\color{black}
	\centering
        \includegraphics[width=\columnwidth]{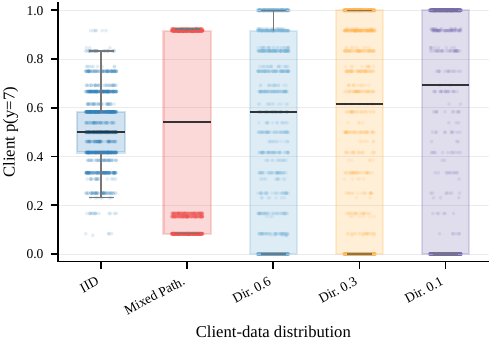}
	\caption{Client-level digit-7 proportions across MNIST partitions.}
	\label{client-label-heterogeneity}
\end{figure}

\begin{figure}[!t]
\color{black}
\centering
\includegraphics[width=\columnwidth]{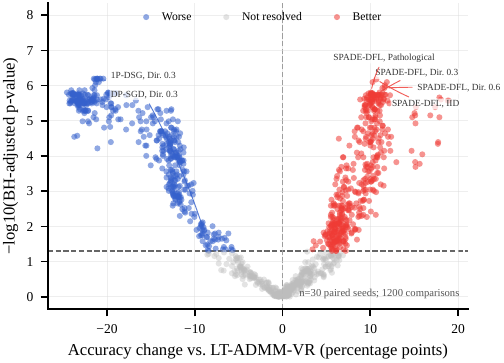}
\caption{Accuracy differences relative to LT-ADMM-VR.}
\label{method-volcano}
\end{figure}

A separate heterogeneity experiment evaluates SPADE-DFL on binary MNIST using $N=1000$ clients on a degree-two ring, $\tau=8$, and $(\varepsilon,\delta)=(8,10^{-5})$. The experiment uses 50 communication rounds and 30 paired random seeds. In Fig.~\ref{distribution-accuracy}, the curves show the arithmetic mean, and the shaded regions show two-sided 95\% Student's $t$ confidence intervals. The inset reports the matched-seed mean accuracy difference relative to the IID partition.
Fig.~\ref{client-label-heterogeneity} shows the client-level proportion of digit 7 under IID, pathological denoted by Mixed Path, and Dirichlet partitions with $\alpha\in\{0.6,0.3,0.1\}$.
The mean accuracy curves in Fig.~\ref{distribution-accuracy} are closely aligned across the five partitions. Fig.~\ref{client-label-heterogeneity} shows greater variation in the proportion of digit-7 samples across clients under the non-IID partitions.

In Fig.~\ref{method-volcano}, each point represents one method--partition--round comparison relative to LT-ADMM-VR. The horizontal coordinate is the paired mean accuracy difference over 30 seeds. Statistical comparisons use two-sided paired Wilcoxon signed-rank tests, with Benjamini--Hochberg correction across the 1,200 comparisons. These tests provide comparisons along the training trajectories, whose successive rounds are statistically dependent.

\subsection{Communication--Optimization Trade-off}

\begin{figure}[!t]
\centering
\includegraphics[width=\columnwidth]{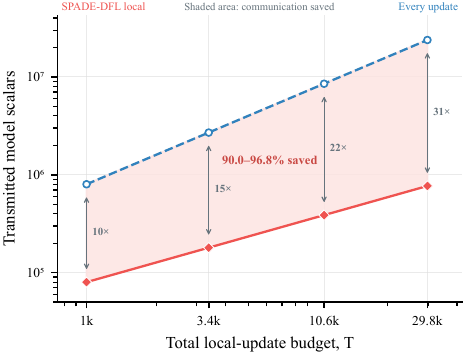}
\caption{Communication cost under matched local-update budgets.}
\label{fig_communication_saving_band}
\end{figure}

Fig.~\ref{fig_communication_saving_band} compares communication after every local update, corresponding to $\tau=1$, with the prescribed local-update schedule $\tau=T^{1/3}$. The experiment uses $N=20$ clients arranged in a cycle, model dimension $d=20$, and total local-update budgets $T\in\{1000,3375,10648,29791\}$. These budgets correspond to $\tau\in\{10,15,22,31\}$ under the local-update schedule.

Cumulative communication is measured as $2|\mathcal{E}|dK$ transmitted model scalars, where $K=T/\tau$. The local-update schedule reduced communication by factors of 10, 15, 22, and 31 for the four update budgets. At \(T=29{,}791\), the communication volume was 768,800 scalars with $\tau=31$ and 23,832,800 scalars with $\tau=1$, a reduction of 96.8\%.

The empirical stationarity–consensus criterion decreased with the update budget under both schedules. At $T=29{,}791$, its mean over 30 paired seeds was 0.010392 with $\tau=31$ and 0.000372 with $\tau=1$.
At this budget, the 96.8\% reduction in transmitted scalars is accompanied by a larger stationarity--consensus residual.
\par
\endgroup

\section{Conclusion}
SPADE-DFL characterizes how local computation can replace neighbor communication when learning relies on single-point function evaluations. For smooth nonconvex objectives under uniform query-moment bounds, the prescribed nonprivate schedule yields a time-averaged stationarity and consensus bound of $\mathcal{O}(T^{-1/3}+\tau^2/T)$, where $T$ is the number of local updates per client and $\tau$ is the number of updates between exchanges. This dependence permits $\tau=\Theta(T^{1/3})$, reducing communication to $\Theta(T^{2/3})$ rounds while preserving the $\mathcal{O}(T^{-1/3})$ convergence order. The communication interval also determines how often local information is privately released. Protecting the accumulated data-dependent increment once per round establishes client-level differential privacy for the full interactive transcript. With a fixed clipping radius and total zCDP budget, fewer exchanges reduce the required uniform Gaussian noise variance per release. The finite-horizon bound quantifies the accompanying local drift, clipping, and release errors, making explicit how the choice of communication interval affects the optimization cost of private training.
	
	\begin{spadecompact}
		\appendices
		
		\section{Oracle and Memory Bounds}
		\label{app:oracle-memory}
		
		\begin{IEEEproof}[Proof of Lemma~\ref{lem:oracle_memory}]
			For (i), apply smoothness at the $\mathcal F$-measurable point $x$:
			\begin{align}
			f_{i,h}(x+\mu u)
			&=f_{i,h}(x)+\mu\langle\nabla f_{i,h}(x),u\rangle+R(x,u,\mu),
			\label{eq:taylor-expansion}\\
			|R(x,u,\mu)|
			&\le\frac{L_f\mu^2}{2}\|u\|^2.
			\label{eq:taylor-remainder}
			\end{align}
			Assumption~\ref{ass:oracle} gives $\E[u\mid\mathcal F]=0$, $\E[uu^\top\mid\mathcal F]=c_uI_d$, and $\E[u\zeta\mid\mathcal F]=\E[u\,\E[\zeta\mid\mathcal F,u]\mid\mathcal F]=0$. Hence the constant and noise terms vanish, while the linear term yields $\mu\nabla f_{i,h}(x)$. The remainder satisfies
			\begin{align*}
			r_{i,h}(x,\mu)
			&=\frac{1}{c_u}\E[uR(x,u,\mu)\mid\mathcal F],\\
			\|r_{i,h}(x,\mu)\|
			&\le\frac{L_f\mu^2}{2c_u}\E[\|u\|^3\mid\mathcal F]
			\le c_r\mu^2.
			\end{align*}
			Moreover, \eqref{eq:q-generic} and the query-value moment bound imply
			\[
			\E[\|q_{i,h}(x;u,\zeta)\|^2] \le\frac{R_u^2}{c_u^2} \E[|\widetilde f_{i,h}(x+\mu u)|^2] \le\frac{R_u^2M_2}{c_u^2}=Q^2.
			\]
			
			For (ii), condition on $\mathcal F_{k,t}$, which fixes the memory table. For the expectation calculation, couple potential queries $\{q_{i,h,k}^t\}_{h=1}^{m_i}$ independently of the current batch selection; only sampled queries are evaluated. Uniform sampling in \eqref{eq:memory-estimator} gives
			\begin{align}
			\E[v_{i,k}^t\mid\mathcal F_{k,t}]
			&=\frac{1}{m_i}\sum_{h=1}^{m_i}
			\E[q_{i,h,k}^t\mid\mathcal F_{k,t}]\notag\\
			&=\mu_k\nabla f_i(\phi_{i,k}^t)+r_{i,k}^t.
			\label{eq:memory-cancellation}
			\end{align}
			since the sampled-memory mean cancels $\bar a_{i,k}^t$. Part (i) bounds each component remainder, and thus $\|r_{i,k}^t\|\le c_r\mu_k^2$.
			
			Each initialized entry has second moment at most $Q^2$. An entry is refreshed with probability $b_i/m_i$, independently of its current value. Therefore,
			\begin{align*}
			\E[\|a_{i,h,k}^{t+1}\|^2]
			\le{}&\left(1-\frac{b_i}{m_i}\right)
			\E[\|a_{i,h,k}^{t}\|^2]
			+\frac{b_i}{m_i}Q^2.
			\end{align*}
			Induction gives $\E[\|a_{i,h,k}^{t}\|^2]\le Q^2$ for each $h$ and $t$. This is an unconditional bound on the stored entries. Uniform sampling, Jensen's inequality, and total expectation then yield
			\begin{align*}
			\E\!\left[\left\|\frac{1}{b_i}
			\sum_{h\in\mathcal B_{i,k}^t}q_{i,h,k}^t\right\|^2\right]&\le Q^2,\\
			\E\!\left[\left\|\frac{1}{b_i}
			\sum_{h\in\mathcal B_{i,k}^t}a_{i,h,k}^t\right\|^2\right]&\le Q^2,
			\qquad \E[\|\bar a_{i,k}^t\|^2]\le Q^2.
			\end{align*}
			Applying $\|x-y+z\|^2\le3(\|x\|^2+\|y\|^2+\|z\|^2)$ to \eqref{eq:memory-estimator} gives $\E[\|v_{i,k}^t\|^2]\le9Q^2$, completing the proof.
		\end{IEEEproof}
		
		\section{Message Recursion and Transcript Privacy}
		\label{app:message-privacy}
		
		\begin{IEEEproof}[Proof of Lemma~\ref{lem:message}]
			For an oriented edge $e=(i,j)$, \eqref{eq:message}--\eqref{eq:z-update} give
			\begin{align}
			z_{ij,k+1}
			&=\frac{1}{2}
			\left(z_{ij,k}-z_{ji,k}+2\rho x_{j,k+1}\right),
			\notag\\
			z_{ji,k+1}
			&=\frac{1}{2}
			\left(z_{ji,k}-z_{ij,k}+2\rho x_{i,k+1}\right).
			\label{eq:two-z-updates}
			\end{align}
			Their difference gives \eqref{eq:omega-update}, and their sum gives
			\begin{align}
			z_{ij,k+1}+z_{ji,k+1}
			=\rho(x_{i,k+1}+x_{j,k+1}).
			\label{eq:z-sum-invariant}
			\end{align}
			The same sum identity holds at $k=0$, since $x_{i,0}=x_{j,0}=x_0$ and $z_{ij,0}=z_{ji,0}=\rho x_0$. Consequently,
			\begin{align}
			z_{ij,k}
			=B_{i,e}\omega_{e,k}
			+\frac{\rho}{2}(x_{i,k}+x_{j,k}).
			\label{eq:z-decomposition}
			\end{align}
			Substitution into \eqref{eq:penalty} yields
			\begin{align}
			p_k=\frac{\rho}{2}L_{\mathcal G}x_k-B\omega_k.
			\label{eq:p-current-omega}
			\end{align}
			Using $\omega_k=\omega_{k-1}-(\rho/2)B^\top x_k$ in \eqref{eq:p-current-omega} proves \eqref{eq:p-omega}. The definition $\lambda_k=-B\omega_{k-1}$ then gives \eqref{eq:p-lambda}--\eqref{eq:lambda-update}. At initialization, the shifted identity holds due to $\omega_{-1}=\omega_0=0$ and $B^\top(\one\otimes x_0)=0$. Finally, $\lambda_k\in\range(B)$, $\one^\top B=0$, and $\one^\top L_{\mathcal G}=0$ imply the zero-sum identities.
		\end{IEEEproof}
		
		\begin{IEEEproof}[Proof of Theorem~\ref{thm:privacy}]
			Fix client $i$ and condition on the transcript history $\mathcal H_k$. The vector $c_{i,k}:=x_{i,k}-\tau_i\eta_k\mu_k\beta p_{i,k}$ is then fixed. Let $\mathsf Q_{\mathcal D_i,k}$ be the conditional law of $\bar s_{i,k}$. Independence of the release noise gives
			\begin{equation}
			\mathsf P_{\mathcal D_i,k}
			=\int\mathcal N(c_{i,k}+s,\sigma_{\mathrm{dp},k}^2I_d)
			\,\mathsf Q_{\mathcal D_i,k}(\mathrm ds).
			\label{eq:conditional-release-mean}
			\end{equation}
			Both mixing laws under adjacent datasets are supported on $\{s:\|s\|\le R_k^{\mathrm{clip}}\}$, so
			\begin{align}
			\|s-s'\|\le2R_k^{\mathrm{clip}}.
			\label{eq:client-sensitivity}
			\end{align}
			For any R\'enyi order $\gamma>1$, use the common product measure $\mathsf Q_{\mathcal D_i,k}\otimes\mathsf Q_{\mathcal D_i',k}$ to couple the mixture centers. Data processing and the log-sum inequality give
			\begin{align}
			D_\gamma(\mathsf P_{\mathcal D_i,k}\|
			\mathsf P_{\mathcal D_i',k})
			\le\sup_{s,s'}D_\gamma\!\Bigl(&
			\mathcal N(c_{i,k}+s,\sigma_{\mathrm{dp},k}^2I_d)
			\notag\\
			&\Big\|\mathcal N(c_{i,k}+s',\sigma_{\mathrm{dp},k}^2I_d)
			\Bigr).
			\label{eq:mixture-renyi}
			\end{align}
			The equal-covariance Gaussian formula and \eqref{eq:client-sensitivity} imply
			\begin{equation}
			D_\gamma(\mathsf P_{\mathcal D_i,k}\| \mathsf P_{\mathcal D_i',k}) \le\frac{\gamma(2R_k^{\mathrm{clip}})^2} {2\sigma_{\mathrm{dp},k}^2} =\frac{2\gamma(R_k^{\mathrm{clip}})^2} {\sigma_{\mathrm{dp},k}^2}.
			\label{eq:mixture-zcdp}
			\end{equation}
			Thus, client $i$'s conditional release is $2(R_k^{\mathrm{clip}})^2/\sigma_{\mathrm{dp},k}^2$-zCDP.
			
			All outgoing messages are deterministic post-processing of the released state and history. Under client-$i$ adjacency, the other clients' datasets are fixed; their responses use the transcript and fresh independent randomness, adding no separate privacy charge for client $i$. Adaptive composition yields the cumulative zCDP budget, whose conversion to approximate DP gives the stated privacy guarantee. The adjacency relation replaces the entire local dataset, establishing the stated client-level guarantee.
		\end{IEEEproof}
		
		\section{Schur Stability and Disagreement Bounds}
		\label{app:stability}
		
		\begin{IEEEproof}[Proof of Lemma~\ref{lem:stability}]
			Summing \eqref{eq:local-update} and applying \eqref{eq:private-release} gives
			\begin{align}
			x_{k+1}
			=x_k-a p_k+w_k,
			\label{eq:stacked-x-update}\\
			w_k
			=-\eta\sum_{t=0}^{\tau-1}v_k^t
			+e_k^{\mathrm{cl}}+\nu_k.
			\label{eq:w-definition}
			\end{align}
			Equations \eqref{eq:p-lambda}--\eqref{eq:lambda-update} then yield the modal recursion
			\begin{align}
			\begin{bmatrix}
			\widehat x_{\lambda,k+1}\\
			\widehat\lambda_{\lambda,k+1}
			\end{bmatrix}
			=M_\lambda
			\begin{bmatrix}
			\widehat x_{\lambda,k}\\
			\widehat\lambda_{\lambda,k}
			\end{bmatrix}
			+
			\begin{bmatrix}
			\widehat w_{\lambda,k}\\
			0
			\end{bmatrix}.
			\label{eq:modal-dynamics}
			\end{align}
			The trace and determinant are
			\begin{equation}
			\operatorname{tr}(M_\lambda)=2-a\rho\lambda,
			\qquad
			\det(M_\lambda)=1-\frac{a\rho\lambda}{2}.
			\label{eq:trace-det}
			\end{equation}
			The second-order Jury criterion gives
			\begin{equation}
			\begin{aligned}[b]
			1-\det(M_\lambda)&=\frac{a\rho\lambda}{2},\\
			1-\operatorname{tr}(M_\lambda)+\det(M_\lambda)
			&=\frac{a\rho\lambda}{2},\\
			1+\operatorname{tr}(M_\lambda)+\det(M_\lambda)
			&=4-\frac{3a\rho\lambda}{2}.
			\end{aligned}
			\label{eq:jury-expressions}
			\end{equation}
			These three expressions are positive exactly when $0<a\rho\lambda<8/3$. Requiring this for each nonzero Laplacian eigenvalue gives \eqref{eq:stability}.
			
			For a Schur-stable $M_\lambda$, the convergent series
			\begin{align}
			P_\lambda
			=\sum_{r=0}^{\infty}
			(M_\lambda^r)^\top M_\lambda^r
			\label{eq:P-series}
			\end{align}
			is the unique positive-definite solution of \eqref{eq:lyapunov}. In the ordering of $\xi_k$, set
			\[
			M:=\mathsf S^\top
			\diag_{\ell=2}^N(M_{\lambda_\ell}\otimes I_d)\mathsf S.
			\]
			Then $M^\top\mathcal P M-\mathcal P=-I_{2(N-1)d}$. Writing $\|\xi\|_{\mathcal P}^2:=\xi^\top\mathcal P\xi$, we obtain
			\begin{align}
			\|M\xi\|_{\mathcal P}^2
			=\|\xi\|_{\mathcal P}^2-\|\xi\|^2
			\le\left(1-\frac{1}{\overline p}\right)
			\|\xi\|_{\mathcal P}^2.
			\label{eq:unforced-contraction}
			\end{align}
			Since $\overline p>1$, Young's inequality with parameter $[2(\overline p-1)]^{-1}$ yields
			\begin{align}
			\|M\xi+\operatorname{col}(\widehat w,0)\|_{\mathcal P}^2
			\le{}&\left(1-\frac{1}{2\overline p}\right)\|\xi\|_{\mathcal P}^2
			\notag\\
			&+\overline p(2\overline p-1)\|\widehat w\|^2.
			\label{eq:forced-contraction}
			\end{align}
			Here $\widehat w=(U_\perp^\top\!\otimes I_d)w$, so $\|\widehat w\|=\|\Pi w\|$.
			
			The release noise is independent and zero-mean, eliminating its cross terms with the local update and clipping error. Since $\Pi$ is nonexpansive, Cauchy--Schwarz and Lemma~\ref{lem:oracle_memory}(ii) give
			\begin{align}
			&\frac{1}{N}\E[\|\Pi w_k\|^2]
			\le\frac{2\eta^2\tau}{N}
			\sum_{t=0}^{\tau-1}\E[\|v_k^t\|^2]+2E_k^{\mathrm{cl}}
			\notag\\
			&\quad+\left(1-\frac{1}{N}\right)d\sigma_{\mathrm{dp},k}^2
			\notag\\
			&\quad\le18\eta^2\tau^2Q^2+2E_k^{\mathrm{cl}}
			+\left(1-\frac{1}{N}\right)d\sigma_{\mathrm{dp},k}^2.
			\label{eq:w-bound-start}
			\end{align}
			Taking expectations in \eqref{eq:forced-contraction} and dividing by $N$ proves \eqref{eq:V-recursion}.
			
			The eigenvalue bounds on $\mathcal P$ imply \eqref{eq:C-by-V}. Using $p_k=\rho L_{\mathcal G}x_k+\lambda_k$ further gives
			\[
			\frac{1}{N}\E[\|p_k\|^2] \le2\rho^2\lambda_N^2\mathcal C_k +\frac{2}{N}\E[\|\lambda_k\|^2] \le\frac{2(\rho^2\lambda_N^2+1)}{\underline p}\mathcal V_k,
			\]
			which proves \eqref{eq:p-by-V}. The common initialization has zero primal disagreement and $\lambda_0=0$, hence $\mathcal V_0=0$.
		\end{IEEEproof}
		
		\section{Descent and Convergence Bounds}
		\label{app:descent-main}
		
		\begin{IEEEproof}[Proof of Lemma~\ref{lem:local_disagreement}]
			Iterating \eqref{eq:local-update} gives
			\begin{align}
			\phi_k^t
			=x_k-\eta\sum_{s=0}^{t-1}v_k^s
			-t\alpha\beta p_k.
			\label{eq:phi-expansion}
			\end{align}
			Since $\Pi p_k=p_k$, applying $\Pi$ and $\|a+b+c\|^2\le3(\|a\|^2+\|b\|^2+\|c\|^2)$ yields, by Lemma~\ref{lem:oracle_memory}(ii),
			\begin{align}
			\mathcal C_{k,t}^{\phi}
			\le
			3\mathcal C_k
			+27\eta^2t^2Q^2
			+3t^2\alpha^2\beta^2
			\frac{1}{N}\E\!\left[\|p_k\|^2\right].
			\label{eq:Cphi-intermediate}
			\end{align}
			Equations \eqref{eq:C-by-V}--\eqref{eq:p-by-V} and $t\le\tau-1$ give \eqref{eq:local-disagreement}.
		\end{IEEEproof}
		
		\begin{IEEEproof}[Proof of Lemma~\ref{lem:descent}]
			The zero-sum identity $\one^\top p_k=0$ gives
			\begin{align}
			\bar\phi_k^{t+1}
			=\bar\phi_k^t-\eta\bar v_k^t,
			\qquad
			\bar v_k^t=\frac{1}{N}\sum_{i=1}^{N}v_{i,k}^t.
			\label{eq:average-local-update}
			\end{align}
			Set $G_k^t:=\nabla F(\bar\phi_k^t)$ and $g_k^t:=N^{-1}\sum_i\nabla f_i(\phi_{i,k}^t)$. Smoothness and Jensen's inequality imply
			\begin{align}
			\E\!\left[\|g_k^t-G_k^t\|^2\right]
			\le L_f^2\mathcal C_{k,t}^{\phi}.
			\label{eq:gradient-mismatch}
			\end{align}
			Lemma~\ref{lem:oracle_memory}(ii) also gives
			\begin{align}
			\E\!\left[\bar v_k^t\mid\mathcal F_{k,t}\right]
			=\mu g_k^t+\bar r_k^t,
			\qquad
			\|\bar r_k^t\|\le c_r\mu^2.
			\label{eq:average-v-mean}
			\end{align}
			where $\bar r_k^t:=N^{-1}\sum_i r_{i,k}^t$. By smoothness, conditional expectation, and $\E[\|\bar v_k^t\|^2]\le9Q^2$,
			\begin{align}
			\E\!\left[F(\bar\phi_k^{t+1})\right]
			\le{}&
			\E\!\left[F(\bar\phi_k^t)\right]
			-\alpha\E\!\left[\langle G_k^t,g_k^t\rangle\right]
			\notag\\
			&-\eta\E\!\left[\langle G_k^t,\bar r_k^t\rangle\right]
			+\frac{9L_f}{2}\eta^2Q^2.
			\label{eq:descent-start}
			\end{align}
			The inequalities
			\begin{align}
			\langle G_k^t,g_k^t\rangle
			&\ge\frac{1}{2}\|G_k^t\|^2
			-\frac{1}{2}\|g_k^t-G_k^t\|^2,
			\notag\\
			\eta|\langle G_k^t,\bar r_k^t\rangle|
			&\le\frac{\alpha}{4}\|G_k^t\|^2
			+\alpha c_r^2\mu^2
			\label{eq:descent-young}
			\end{align}
			therefore yield
			\begin{align}
			\E\!\left[F(\bar\phi_k^{t+1})\right]
			\le{}&
			\E\!\left[F(\bar\phi_k^t)\right]
			-\frac{\alpha}{4}\E\!\left[\|G_k^t\|^2\right]
			\notag\\
			&+\frac{\alpha L_f^2}{2}
			\mathcal C_{k,t}^{\phi}
			+\alpha c_r^2\mu^2
			+\frac{9L_f}{2}\eta^2Q^2.
			\label{eq:one-step-descent}
			\end{align}
			Sum over $t$ and use Lemma~\ref{lem:local_disagreement}. Since $\sum_{t=0}^{\tau-1}t^2=\tau(\tau-1)(2\tau-1)/6$, the definition of $A_{\mathrm{loc}}$ gives
			\begin{align}
			\E[F(\bar\phi_k^\tau)]
			\le{}&\E[F(\bar x_k)]
			-\frac{\alpha}{4}\sum_{t=0}^{\tau-1}\E[\|G_k^t\|^2]
			\notag\\
			&+\frac{\alpha L_f^2\tau\Gamma}{2}\mathcal V_k
			+A_{\mathrm{loc}}
			-\frac{9}{8}\alpha L_f^2\eta^2Q^2.
			\label{eq:pre-release-descent}
			\end{align}
			
			The average release satisfies
			\begin{align}
			\bar x_{k+1}
			=\bar\phi_k^\tau
			+\bar e_k^{\mathrm{cl}}+\bar\nu_k.
			\label{eq:average-release}
			\end{align}
			Conditionally on the pre-noise variables, $\bar\nu_k$ is zero-mean and
			\begin{align}
			\E\!\left[\|\bar\nu_k\|^2\right]
			=\frac{d\sigma_{\mathrm{dp},k}^2}{N}.
			\label{eq:average-noise}
			\end{align}
			A second application of smoothness yields
			\begin{align}
			\E\!\left[F(\bar x_{k+1})\right]
			\le{}&
			\E\!\left[F(\bar\phi_k^\tau)\right]
			+\E\!\left[\langle\nabla F(\bar\phi_k^\tau),
			\bar e_k^{\mathrm{cl}}\rangle\right]
			\notag\\
			&+\frac{L_f}{2}\E\!\left[\|\bar e_k^{\mathrm{cl}}\|^2\right]
			+\frac{L_f}{2N}d\sigma_{\mathrm{dp},k}^2.
			\label{eq:release-smoothness}
			\end{align}
			Young's inequality gives
			\begin{align}
			\langle\nabla F(\bar\phi_k^\tau),\bar e_k^{\mathrm{cl}}\rangle
			\le\frac{\alpha}{16}
			\|\nabla F(\bar\phi_k^\tau)\|^2
			+\frac{4}{\alpha}\|\bar e_k^{\mathrm{cl}}\|^2.
			\label{eq:clip-young}
			\end{align}
			Since $\bar\phi_k^\tau=\bar\phi_k^{\tau-1}-\eta\bar v_k^{\tau-1}$, smoothness also gives
			\begin{align}
			\E\!\left[\|\nabla F(\bar\phi_k^\tau)\|^2\right]
			\le2\E\!\left[\|G_k^{\tau-1}\|^2\right]
			+18L_f^2\eta^2Q^2.
			\label{eq:terminal-gradient}
			\end{align}
			Finally, Jensen's inequality yields $\E[\|\bar e_k^{\mathrm{cl}}\|^2]\le E_k^{\mathrm{cl}}$. Substituting \eqref{eq:clip-young}--\eqref{eq:terminal-gradient} into \eqref{eq:release-smoothness} adds at most $(\alpha/8)\E[\|G_k^{\tau-1}\|^2]$ to the gradient terms. The accompanying $(9/8)\alpha L_f^2\eta^2Q^2$ cancels the last term in \eqref{eq:pre-release-descent}. Bounding each remaining gradient coefficient by $-\alpha/8$ proves \eqref{eq:round-descent}.
		\end{IEEEproof}
		
		\begin{IEEEproof}[Proof of Theorem~\ref{thm:main}]
			Define
			\[
			\mathcal R_K:=
			18K\eta^2\tau^2Q^2+2\mathcal E_{\mathrm{cl},K}
			+\left(1-\frac{1}{N}\right)\mathcal E_{\mathrm{dp},K}.
			\]
			Summing \eqref{eq:V-recursion} and using $\mathcal V_K\ge0$ gives
			\begin{align}
			\chi\sum_{k=0}^{K-1}\mathcal V_k
			\le\mathcal V_0+c_w\mathcal R_K.
			\label{eq:V-sum}
			\end{align}
			Equation \eqref{eq:C-by-V} then implies
			\begin{align}
			\frac{1}{K}\sum_{k=0}^{K-1}\mathcal C_k
			\le
			\frac{\mathcal V_0+c_w\mathcal R_K}
			{K\chi\underline p}.
			\label{eq:consensus-sum}
			\end{align}
			Summing \eqref{eq:round-descent} and using $F(\bar x_K)\ge F_\star$ yields
			\begingroup
			\renewcommand{\mintagsep}{3pt}
			\begin{align}
			&\frac{\alpha}{8}\sum_{k=0}^{K-1}\sum_{t=0}^{\tau-1}
			\E[\|\nabla F(\bar\phi_k^t)\|^2]
			\le F(\bar x_0)-F_\star+KA_{\mathrm{loc}}
			\notag\\
			&\quad+\frac{\alpha L_f^2\tau\Gamma}{2}
			\sum_{k=0}^{K-1}\mathcal V_k
			+\left(\frac{4}{\alpha}+\frac{L_f}{2}\right)\mathcal E_{\mathrm{cl},K}
			+\frac{L_f}{2N}\mathcal E_{\mathrm{dp},K}.
			\label{eq:gradient-sum}
			\end{align}
			\endgroup
			Substitute \eqref{eq:V-sum}, divide by $K\tau\alpha/8$, and add \eqref{eq:consensus-sum}. This gives
			\begin{align}
			\mathcal S_K\le{}&
			\frac{8[F(\bar x_0)-F_\star]}{K\tau\alpha}
			+\frac{8A_{\mathrm{loc}}}{\tau\alpha}\notag\\
			&+\left(4L_f^2\Gamma+\frac{1}{\underline p}\right)
			\frac{\mathcal V_0+c_w\mathcal R_K}{K\chi}\notag\\
			&+\left(\frac{32}{\tau\alpha^2}+\frac{4L_f}{\tau\alpha}\right)
			\overline{\mathcal E}_{\mathrm{cl},K}
			+\frac{4L_f}{N\tau\alpha}\overline{\mathcal E}_{\mathrm{dp},K}.
			\label{eq:explicit-main-bound}
			\end{align}
			The common initialization gives $\mathcal V_0=0$. For fixed $a=\tau\beta\alpha$, $\rho$, and graph,
			\[
			(\tau-1)^2\alpha^2\beta^2
			=\left(\frac{\tau-1}{\tau}\right)^2a^2\le a^2.
			\]
			Thus, $\Gamma$ is bounded uniformly over $\tau$, and the Lyapunov constants are fixed. Directly from their definitions,
			\begin{align*}
			\frac{A_{\mathrm{loc}}}{\tau\alpha}
			&=\mathcal O\!\left(\frac{\eta}{\mu}+\mu^2+\eta^2\tau^2\right),\\
			\frac{\mathcal R_K}{K}
			&=\mathcal O\!\left(\eta^2\tau^2
			+\overline{\mathcal E}_{\mathrm{cl},K}
			+\overline{\mathcal E}_{\mathrm{dp},K}\right).
			\end{align*}
			For $\alpha>0$ and $\tau\ge1$, $1/(\tau\alpha)\le1+1/(\tau\alpha^2)$. Substitution into \eqref{eq:explicit-main-bound} establishes \eqref{eq:main-rate}.
		\end{IEEEproof}
		
		\begin{IEEEproof}[Proof of Corollary~\ref{cor:nonprivate}]
			The schedule \eqref{eq:nonprivate-schedule} fixes $a=\tau\beta_{T,\tau}\eta_T\mu_T=a_0$. Since $T=K\tau$, each of $1/(T\eta_T\mu_T)$, $\eta_T/\mu_T$, and $\mu_T^2$ is $\mathcal O(T^{-1/3})$, while $\eta_T^2\tau^2=\mathcal O(\tau^2/T)$. The release terms vanish, so \eqref{eq:main-rate} gives \eqref{eq:nonprivate-rate}. The stated local-update range and communication count follow from $\tau^2/T=\mathcal O(T^{-1/3})$ and $K=T/\tau$.
		\end{IEEEproof}
	\end{spadecompact}

\bibliographystyle{IEEEtran}
\bibliography{spade_dfl_v1_refs}

\begin{thebibliography}{10}
\providecommand{\url}[1]{#1}
\csname url@samestyle\endcsname
\providecommand{\newblock}{\relax}
\providecommand{\bibinfo}[2]{#2}
\providecommand{\BIBentrySTDinterwordspacing}{\spaceskip=0pt\relax}
\providecommand{\BIBentryALTinterwordstretchfactor}{4}
\providecommand{\BIBentryALTinterwordspacing}{\spaceskip=\fontdimen2\font plus
\BIBentryALTinterwordstretchfactor\fontdimen3\font minus
  \fontdimen4\font\relax}
\providecommand{\BIBforeignlanguage}[2]{{%
\expandafter\ifx\csname l@#1\endcsname\relax
\typeout{** WARNING: IEEEtran.bst: No hyphenation pattern has been}%
\typeout{** loaded for the language `#1'. Using the pattern for}%
\typeout{** the default language instead.}%
\else
\language=\csname l@#1\endcsname
\fi
#2}}
\providecommand{\BIBdecl}{\relax}
\BIBdecl

\bibitem{zehtabi2025decentralized}
S.~Zehtabi, D.-J. Han, R.~Parasnis, S.~Hosseinalipour, and C.~G. Brinton,
  ``Decentralized sporadic federated learning: A unified algorithmic framework
  with convergence guarantees,'' in \emph{The Thirteenth International
  Conference on Learning Representations}, 2025.

\bibitem{wu2025localupdates}
T.~Wu, Z.~Li, and Y.~Sun, ``The effectiveness of local updates for
  decentralized learning under data heterogeneity,'' \emph{IEEE Transactions on
  Signal Processing}, vol.~73, pp. 751--765, 2025.

\bibitem{alghunaim2024led}
S.~A. Alghunaim, ``Local exact-diffusion for decentralized optimization and
  learning,'' \emph{IEEE Transactions on Automatic Control}, vol.~69, no.~11,
  pp. 7371--7386, 2024.

\bibitem{ren2026communication}
X.~Ren, N.~Bastianello, K.~H. Johansson, and T.~Parisini,
  ``Communication-efficient stochastic distributed learning,'' \emph{IEEE
  Transactions on Automatic Control}, vol.~71, no.~9, pp. 5741--5756, 2026.

\bibitem{ghadimi2013stochastic}
S.~Ghadimi and G.~Lan, ``Stochastic first- and zeroth-order methods for
  nonconvex stochastic programming,'' \emph{SIAM Journal on Optimization},
  vol.~23, no.~4, pp. 2341--2368, 2013.

\bibitem{ye2025hessian}
H.~Ye, Z.~Huang, C.~Fang, C.~J. Li, and T.~Zhang, ``Hessian-aware zeroth-order
  optimization,'' \emph{IEEE Transactions on Pattern Analysis and Machine
  Intelligence}, vol.~47, no.~6, pp. 4869--4877, 2025.

\bibitem{huang2024zoadmm}
F.~Huang, S.~Gao, J.~Pei, and H.~Huang, ``Nonconvex zeroth-order stochastic
  {ADMM} methods with lower function query complexity,'' \emph{IEEE
  Transactions on Pattern Analysis and Machine Intelligence}, pp. 1--13, 2024,
  early Access.

\bibitem{mhanna2023single}
E.~Mhanna and M.~Assaad, ``Single point-based distributed zeroth-order
  optimization with a non-convex stochastic objective function,'' in
  \emph{Proceedings of the 40th International Conference on Machine Learning},
  ser. Proceedings of Machine Learning Research, vol. 202.\hskip 1em plus 0.5em
  minus 0.4em\relax PMLR, 2023, pp. 24\,701--24\,719.

\bibitem{mhanna2024zero}
------, ``Zero-order one-point gradient estimate in consensus-based distributed
  stochastic optimization,'' \emph{Transactions on Machine Learning Research},
  Nov. 2024.

\bibitem{song2022compressedgt}
Z.~Song, L.~Shi, S.~Pu, and M.~Yan, ``Compressed gradient tracking for
  decentralized optimization over general directed networks,'' \emph{IEEE
  Transactions on Signal Processing}, vol.~70, pp. 1775--1787, 2022.

\bibitem{nassif2025def}
R.~Nassif, S.~Vlaski, M.~Carpentiero, V.~Matta, and A.~H. Sayed, ``Differential
  error feedback for communication-efficient decentralized learning,''
  \emph{IEEE Transactions on Signal Processing}, vol.~73, pp. 1905--1921, 2025.

\bibitem{he2023unbiased}
Y.~He, X.~Huang, and K.~Yuan, ``Unbiased compression saves communication in
  distributed optimization: When and how much?'' in \emph{Advances in Neural
  Information Processing Systems}, vol.~36, 2023, pp. 47\,991--48\,020.

\bibitem{guo2026gradient}
P.~Guo, R.~Wang, S.~Zeng, J.~Zhu, H.~Jiang, Y.~Wang, Y.~Zhou, F.~Wang,
  H.~Xiong, and L.~Qu, ``Exploring the vulnerabilities of federated learning: A
  deep dive into gradient inversion attacks,'' \emph{IEEE Transactions on
  Pattern Analysis and Machine Intelligence}, vol.~48, no.~4, pp. 4810--4826,
  2026.

\bibitem{rizk2023privacy}
E.~Rizk, S.~Vlaski, and A.~H. Sayed, ``Enforcing privacy in distributed
  learning with performance guarantees,'' \emph{IEEE Transactions on Signal
  Processing}, vol.~71, pp. 3385--3398, 2023.

\bibitem{allouah2024decor}
Y.~Allouah, A.~Koloskova, A.~{El Firdoussi}, M.~Jaggi, and R.~Guerraoui, ``The
  privacy power of correlated noise in decentralized learning,'' in
  \emph{Proceedings of the 41st International Conference on Machine Learning},
  ser. Proceedings of Machine Learning Research, vol. 235.\hskip 1em plus 0.5em
  minus 0.4em\relax PMLR, 2024, pp. 1115--1143.

\bibitem{cyffers2023privateadmm}
E.~Cyffers, A.~Bellet, and D.~Basu, ``From noisy fixed-point iterations to
  private {ADMM} for centralized and federated learning,'' in \emph{Proceedings
  of the 40th International Conference on Machine Learning}, ser. Proceedings
  of Machine Learning Research, vol. 202.\hskip 1em plus 0.5em minus
  0.4em\relax PMLR, 2023, pp. 6683--6711.

\bibitem{bun2016concentrated}
M.~Bun and T.~Steinke, ``Concentrated differential privacy: Simplifications,
  extensions, and lower bounds,'' in \emph{Theory of Cryptography
  Conference}.\hskip 1em plus 0.5em minus 0.4em\relax Springer, 2016, pp.
  635--658.

\bibitem{mironov2017renyi}
I.~Mironov, ``R{\'e}nyi differential privacy,'' in \emph{2017 IEEE 30th
  Computer Security Foundations Symposium}, 2017, pp. 263--275.

\bibitem{boyd2011distributed}
S.~Boyd, N.~Parikh, E.~Chu, B.~Peleato, and J.~Eckstein, ``Distributed
  optimization and statistical learning via the alternating direction method of
  multipliers,'' \emph{Foundations and Trends in Machine Learning}, vol.~3,
  no.~1, pp. 1--122, 2011.

\bibitem{shi2014linearized}
W.~Shi, Q.~Ling, K.~Yuan, G.~Wu, and W.~Yin, ``On the linear convergence of the
  {ADMM} in decentralized consensus optimization,'' \emph{IEEE Transactions on
  Signal Processing}, vol.~62, no.~7, pp. 1750--1761, 2014.

\bibitem{ling2015dlm}
Q.~Ling, W.~Shi, G.~Wu, and A.~Ribeiro, ``{DLM}: Decentralized linearized
  alternating direction method of multipliers,'' \emph{IEEE Transactions on
  Signal Processing}, vol.~63, no.~15, pp. 4051--4064, 2015.

\bibitem{li2023curvature}
Y.~Li, P.~G. Voulgaris, D.~M. Stipanovi{\'c}, and N.~M. Freris, ``Communication
  efficient curvature aided primal-dual algorithms for decentralized
  optimization,'' \emph{IEEE Transactions on Automatic Control}, vol.~68,
  no.~11, pp. 6573--6588, 2023.

\bibitem{gautam2024mezosvrg}
T.~Gautam, Y.~Park, H.~Zhou, P.~Raman, and W.~Ha, ``Variance-reduced
  zeroth-order methods for fine-tuning language models,'' in \emph{Proceedings
  of the 41st International Conference on Machine Learning}, ser. Proceedings
  of Machine Learning Research, vol. 235.\hskip 1em plus 0.5em minus
  0.4em\relax PMLR, 2024, pp. 15\,180--15\,208.

\bibitem{koloskova2023clipping}
A.~Koloskova, H.~Hendrikx, and S.~U. Stich, ``Revisiting gradient clipping:
  Stochastic bias and tight convergence guarantees,'' in \emph{Proceedings of
  the 40th International Conference on Machine Learning}, ser. Proceedings of
  Machine Learning Research, vol. 202.\hskip 1em plus 0.5em minus 0.4em\relax
  PMLR, 2023, pp. 17\,343--17\,363.

\bibitem{mishchenko2022proxskip}
K.~Mishchenko, G.~Malinovsky, S.~Stich, and P.~Richt{\'a}rik, ``{ProxSkip}:
  Yes! {L}ocal gradient steps provably lead to communication acceleration!
  {F}inally!'' in \emph{Proceedings of the 39th International Conference on
  Machine Learning}, ser. Proceedings of Machine Learning Research, vol.
  162.\hskip 1em plus 0.5em minus 0.4em\relax PMLR, 2022, pp. 15\,750--15\,769.

\bibitem{ren2026ltadmmdp}
X.~Ren, Y.~Ma, N.~Bastianello, K.~H. Johansson, T.~Parisini, and A.~A.
  Malikopoulos, ``Communication-efficient distributed learning with
  differential privacy,'' \emph{arXiv preprint arXiv:2604.02558}, 2026.

\bibitem{ding2023dsgdceca}
L.~Ding, K.~Jin, B.~Ying, K.~Yuan, and W.~Yin, ``{DSGD-CECA}: Decentralized
  {SGD} with communication-optimal exact consensus algorithm,'' in
  \emph{Proceedings of the 40th International Conference on Machine Learning},
  ser. Proceedings of Machine Learning Research, vol. 202.\hskip 1em plus 0.5em
  minus 0.4em\relax PMLR, 2023, pp. 8067--8089.

\bibitem{you2024btpp}
R.~You and S.~Pu, ``B-ary tree push-pull method is provably efficient for
  distributed learning on heterogeneous data,'' in \emph{Advances in Neural
  Information Processing Systems}, vol.~37, 2024, pp. 97\,523--97\,561.

\bibitem{yuan2023heterogeneity}
K.~Yuan, S.~A. Alghunaim, and X.~Huang, ``Removing data heterogeneity influence
  enhances network topology dependence of decentralized {SGD},'' \emph{Journal
  of Machine Learning Research}, vol.~24, no. 280, pp. 1--53, 2023.

\bibitem{zhai2026aggregation}
Z.~Zhai, X.~Yuan, X.~Wang, and G.~Y. Li, ``Decentralized federated learning
  with distributed aggregation weight optimization,'' \emph{IEEE Transactions
  on Pattern Analysis and Machine Intelligence}, vol.~48, no.~3, pp.
  3899--3910, 2026.

\bibitem{sun2026generalization}
Y.~Sun, L.~Shen, and D.~Tao, ``Toward understanding generalization and
  stability gaps between centralized and decentralized federated learning,''
  \emph{IEEE Transactions on Pattern Analysis and Machine Intelligence},
  vol.~48, no.~4, pp. 4744--4755, 2026.

\bibitem{zhao2022beer}
H.~Zhao, B.~Li, Z.~Li, P.~Richt{\'a}rik, and Y.~Chi, ``{BEER}: Fast {$O(1/T)$}
  rate for decentralized nonconvex optimization with communication
  compression,'' in \emph{Advances in Neural Information Processing Systems},
  vol.~35, 2022, pp. 31\,653--31\,667.

\bibitem{islamov2025motef}
R.~Islamov, Y.~Gao, and S.~U. Stich, ``Towards faster decentralized stochastic
  optimization with communication compression,'' in \emph{Proceedings of the
  13th International Conference on Learning Representations}, 2025.

\bibitem{li2024onlinecompressed}
J.~Li, C.~Li, J.~Fan, and T.~Huang, ``Online distributed stochastic gradient
  algorithm for nonconvex optimization with compressed communication,''
  \emph{IEEE Transactions on Automatic Control}, vol.~69, no.~2, pp. 936--951,
  2024.

\bibitem{hua2026distributed}
Y.~Hua, S.~Liu, Y.~Hong, and W.~Ren, ``Distributed stochastic zeroth-order
  optimization with compressed communication,'' \emph{IEEE Transactions on
  Automatic Control}, vol.~71, no.~2, pp. 1294--1301, 2026.

\bibitem{xu2024quantized}
L.~Xu, X.~Yi, C.~Deng, Y.~Shi, T.~Chai, and T.~Yang, ``Quantized zeroth-order
  gradient tracking algorithm for distributed nonconvex optimization under
  {Polyak--{\L}ojasiewicz} condition,'' \emph{IEEE Transactions on
  Cybernetics}, vol.~54, no.~10, pp. 5746--5758, 2024.

\bibitem{fang2022communication}
W.~Fang, Z.~Yu, Y.~Jiang, Y.~Shi, C.~N. Jones, and Y.~Zhou,
  ``Communication-efficient stochastic zeroth-order optimization for federated
  learning,'' \emph{IEEE Transactions on Signal Processing}, vol.~70, pp.
  5058--5073, 2022.

\bibitem{li2025decomfl}
Z.~Li, B.~Ying, Z.~Liu, C.~Dong, and H.~Yang, ``Achieving dimension-free
  communication in federated learning via zeroth-order optimization,'' in
  \emph{Proceedings of the 13th International Conference on Learning
  Representations}, 2025.

\bibitem{malladi2023mezo}
S.~Malladi, T.~Gao, E.~Nichani, A.~Damian, J.~D. Lee, D.~Chen, and S.~Arora,
  ``Fine-tuning language models with just forward passes,'' in \emph{Advances
  in Neural Information Processing Systems}, vol.~36, 2023, pp.
  53\,038--53\,075.

\bibitem{li2024adqsp}
Q.~Li, J.~S. Gundersen, M.~Lopuha{\"a}-Zwakenberg, and R.~Heusdens, ``Adaptive
  differentially quantized subspace perturbation ({ADQSP}): A unified framework
  for privacy-preserving distributed average consensus,'' \emph{IEEE
  Transactions on Information Forensics and Security}, vol.~19, pp. 1780--1793,
  2024.

\bibitem{wang2026ping}
L.~Wang, S.~Yang, Y.~Wan, W.~Xu, and M.-L. Zhang, ``Privacy preserving
  decentralized learning with positive-incentive noise,'' \emph{IEEE
  Transactions on Pattern Analysis and Machine Intelligence}, vol.~48, no.~7,
  pp. 8520--8534, 2026.

\bibitem{cyffers2022muffliato}
E.~Cyffers, M.~Even, A.~Bellet, and L.~Massouli\'{e}, ``Muffliato: Peer-to-peer
  privacy amplification for decentralized optimization and averaging,'' in
  \emph{Advances in Neural Information Processing Systems}, vol.~35, 2022, pp.
  15\,889--15\,902.

\bibitem{zhang2024dpzero}
L.~Zhang, B.~Li, K.~K. Thekumparampil, S.~Oh, and N.~He, ``{DPZero}: Private
  fine-tuning of language models without backpropagation,'' in
  \emph{Proceedings of the 41st International Conference on Machine Learning},
  ser. Proceedings of Machine Learning Research, vol. 235.\hskip 1em plus 0.5em
  minus 0.4em\relax PMLR, 2024, pp. 59\,210--59\,246.

\bibitem{gong2025pazo}
X.~Gong and T.~Li, ``Private zeroth-order optimization with public data,'' in
  \emph{Advances in Neural Information Processing Systems}, vol.~38, 2025, pp.
  58\,619--58\,665.

\bibitem{shi2025dpfedsam}
Y.~Shi, K.~Wei, L.~Shen, Y.~Liu, X.~Wang, B.~Yuan, and D.~Tao, ``Toward the
  flatter landscape and better generalization in federated learning under
  client-level differential privacy,'' \emph{IEEE Transactions on Pattern
  Analysis and Machine Intelligence}, vol.~47, no.~12, pp. 11\,632--11\,643,
  2025.

\bibitem{mcmahan2017communication}
B.~McMahan, E.~Moore, D.~Ramage, S.~Hampson, and B.~Ag{\"u}era~y Arcas,
  ``Communication-efficient learning of deep networks from decentralized
  data,'' in \emph{Proceedings of the 20th International Conference on
  Artificial Intelligence and Statistics}, ser. Proceedings of Machine Learning
  Research, vol.~54.\hskip 1em plus 0.5em minus 0.4em\relax PMLR, 2017, pp.
  1273--1282.

\bibitem{kairouz2021advances}
P.~Kairouz, H.~B. McMahan, B.~Avent \emph{et~al.}, ``Advances and open problems
  in federated learning,'' \emph{Foundations and Trends in Machine Learning},
  vol.~14, no. 1--2, pp. 1--210, 2021.

\end{thebibliography}

\end{document}